\documentclass[twoside]{article}

\usepackage[preprint]{aistats2027}
\usepackage[utf8]{inputenc}
\usepackage[T1]{fontenc}   
\usepackage{url}           
\usepackage{booktabs}      
\usepackage{amsfonts}     
\usepackage{nicefrac}      
\usepackage{microtype}     
\usepackage{xcolor}       
\usepackage{bbm}

\usepackage[acronym,nowarn]{glossaries}
\glsdisablehyper
\makeglossaries
\usepackage{xspace}
\newacronym[firstplural=Bayesian neural networks]{BNN}{\textsc{bnn}}{Bayesian neural network}

\newcommand{\bnns}{\glspl{BNN}\xspace}
\newacronym[firstplural=universal conditional censity approximators]{UCDA}{\textsc{ucda}}{universal conditional density approximator}
\newcommand{\ucda}{\gls{UCDA}\xspace}
\newcommand{\ucdas}{\glspl{UCDA}\xspace}
\newacronym{EBSWD}{\textsc{ebswd}}{energy-based sliced Wasserstein distance}

\newacronym{SGHMC}{\textsc{sghmc}}{stochastic gradient Hamiltonian Monte Carlo}
\newcommand{\sghmc}{\gls{SGHMC}\xspace}
\newacronym[firstplural=Gaussian processes]{GP}{\textsc{gp}}{Gaussian process}
\newcommand{\gp}{\gls{GP}\xspace}

\newacronym{DWF}{\textsc{dwf}}{deep weight factorization}
\newcommand{\dwf}{\gls{DWF}\xspace}

\newacronym[firstplural=multi-layer perceptrons]{MLP}{\textsc{mlp}}{multi-layer perceptron}
\newcommand{\mlp}{\gls{MLP}\xspace}
\newcommand{\mlps}{\glspl{MLP}\xspace}
\newacronym[firstplural=convolutional neural networks]{CNN}{\textsc{CNN}}{convolutional neural network}

\newacronym{MCMC}{\textsc{mcmc}}{Markov chain Monte Carlo}

\newacronym{HMC}{\textsc{hmc}}{Hamiltonian Monte Carlo}

\newacronym{SWD}{\textsc{swd}}{sliced Wasserstein distance}

\newacronym{DSWD}{\textsc{dswd}}{distributional sliced Wasserstein distance}

\newacronym{RMSE}{\textsc{rmse}}{root mean squared error}
\newcommand{\rmse}{\gls{RMSE}\xspace}
\newacronym{NLL}{\textsc{nll}}{negative log-likelihood}
\newcommand{\nll}{\gls{NLL}\xspace}
\newacronym{ACC}{\textsc{acc}}{accuracy}
\newcommand{\acc}{\gls{ACC}\xspace}
\newacronym{MAP}{\textsc{map}}{maximum a posteriori}
\newcommand{\map}{\gls{MAP}\xspace}
\newacronym{MMD}{\textsc{mmd}}{maximum mean discrepancy}
\newcommand{\mmd}{\gls{MMD}\xspace}

\newacronym{RKHS}{\textsc{rkhs}}{reproducing kernel Hilbert space}

\newacronym{KL}{\textsc{kl}}{Kullback-Leibler}
\newcommand{\kl}{\gls{KL}\xspace}

\newacronym{ECE}{\textsc{ece}}{expected calibration error}
\newcommand{\ece}{\gls{ECE}\xspace}
\usepackage{caption}
\usepackage{listings}
\usepackage{subcaption}
\usepackage{float}

\usepackage{algorithm}
\usepackage{algorithmic}
\usepackage{float}
\usepackage{xurl}
\usepackage{amsmath}
\usepackage{amssymb}
\usepackage{mathtools}
\usepackage[round]{natbib}

\usepackage{amsthm}
\usepackage{thmtools}

\theoremstyle{plain}
\newtheorem{theorem}{Theorem}[section]
\newtheorem{proposition}[theorem]{Proposition}

\newtheorem{corollary}[theorem]{Corollary}

\theoremstyle{definition}
\newtheorem{definition}[theorem]{Definition}
\newtheorem{assumption}[theorem]{Assumption}

\theoremstyle{remark}
\newtheorem{remark}[theorem]{Remark}

\usepackage{hyperref}
\usepackage{cleveref}

\newcommand{\starref}[1]{\hyperref[#1]{$(\star)$}}

\begin{document}
\twocolumn[
\runningtitle{Sparsifying Stochasticity, Not Capacity}
\aistatstitle{Sparsifying Stochasticity, Not Capacity: Partial Stochasticity via Deep Weight Factorization of Prior Scales}
\runningauthor{Linhard, Filippone}
\aistatsauthor{Marius P. Linhard
  \And
  Maurizio Filippone}
\aistatsaddress{Université Grenoble Alpes, France \And KAUST, Saudi Arabia}]

\begin{abstract} 
Bayesian neural networks need not be fully stochastic to be universal conditional density approximators, but it remains open which parameters should be stochastic. We learn this split by applying deep weight factorization to the prior scales, which are the standard deviations of the parameter priors, while fitting the functional prior to a Gaussian process with a maximum mean discrepancy objective. A parameter whose prior scale falls below a cutoff becomes deterministic and is optimized during inference, so the regularizer sparsifies stochasticity rather than capacity. We give a certificate for universal conditional density approximation that is checkable in linear time, together with a minimal repair when it fails. We further show that the common hybrid scheme of sampling some parameters and optimizing the others is stochastic approximation for a type-II maximum a posteriori objective, and that coupled step sizes can leave a tracking error that does not vanish as the step size shrinks. On a bimodal target, the learned split stays close to an unconstrained reference across all budgets and is insensitive to the cutoff, while random masks that distribute the same prior scales across layers are worse by up to two orders of magnitude. On \texttt{UCI} benchmarks, our method performs on par with a fully stochastic network while keeping about half of its parameters deterministic.
\end{abstract}

\glsresetall
\section{Introduction}
As neural networks are increasingly deployed in real-world applications, reliable uncertainty quantification becomes essential. Bayesian neural networks address this by treating their parameters as random variables, which combines the flexibility of neural networks with principled uncertainty estimates.

A recent work \citep{sharma2023partiallystoch} has shown theoretically and empirically that it is not necessary to treat all parameters in a stochastic way to obtain good uncertainty estimates. It is even enough to make the number of stochastic parameters of the network match the target dimension as long as stochasticity is introduced in the early layers and the downstream network has general function approximation properties \citep{Leshno1993UAT}. This result is promising since it allows for the reduction in the number of stochastic parameters.

However, partial stochasticity complicates model selection by adding another modeling choice: which parameters should we treat in a stochastic way? The theory does not give any indication on how to make this choice. Due to the combinatorially large number of possible ways to assign parameters to the stochastic and deterministic groups, it is computationally impractical to find a solution via brute force.
An additional complication is how to choose sensible priors over stochastic parameters, which is crucial for obtaining good performance and sound uncertainty quantification.

Building on deep weight factorization \citep{kolb2025dwf}, we apply the factorization not to the network weights but to the prior standard deviations, which we refer to as {\em prior scales}. Sparsity in the scales means sparsity in stochasticity: a parameter whose prior scale collapses to zero is deterministic.
Unlike pruning and spike-and-slab-based constructions, which keep pruned parameters at zero in the posterior, our deterministic parameters remain free to be optimized during inference, so the capacity of the fitted network is retained.

Learning the split changes the inference problem. Partially stochastic networks are typically fitted by a hybrid scheme that samples the stochastic parameters while optimizing the deterministic ones. Despite its widespread use, it has not been established what this scheme approximates: it does not sample the joint posterior, nor is the deterministic block a standard \map estimate, since the stochastic parameters are integrated throughout optimization. With this work, we aim to close this gap.

Our contributions are: \textbf{(1)} a method that learns the deterministic/stochastic split by imposing deep weight factorization on prior scales. This sparsifies stochasticity rather than capacity, and for a factorization depth of $D\geq 3$ the non-zero scales are bounded away from zero at stationary points (\Cref{thm:no-small-vector}); \textbf{(2)} a functional prior fitted by the \mmd, where the degree-two polynomial kernel reduces to moment matching with weights fixed by the kernel, together with a decomposition of the \kl divergence that explains why a sparse network can match at most the moments (Proposition~\ref{prop:mmd}, Lemma~\ref{prop:pythagoras});
\textbf{(3)} a deterministic certificate for the universal conditional density approximation property, that is checkable in linear time, and a minimal repair when it fails (Proposition~\ref{prop:certificate}, Corollary~\ref{cor:MinRep}); \textbf{(4)}~a characterization of the hybrid inference scheme as stochastic approximation for a type-II \map objective, and a tracking bound showing that coupled sampler and optimizer step sizes leave an error term that does not shrink with the step size, which is attained by a Gaussian example (\Cref{thm:var}, 
\Cref{thm:track}, Corollary~\ref{cor:coupled}, Proposition~\ref{prop:BoundAttained}).
\section{Background} 
\paragraph{Notation} Suppose we have a dataset with $n$ input-target pairs $\mathcal{D} = \{\mathbf{x_i}, \mathbf{y_i}\}_{1 \leq i \leq n}$. We denote the parameters of a neural network by $\Theta$; the stochastic parameters are denoted by $\Theta_s$ and the deterministic ones by $\Theta_d$. By $\Theta^{(l)}$ we denote the parameters of layer $l$. Furthermore, we denote the neural network by $\mathbf{f}(\cdot; \Theta)$, the $l$'th layer of the neural network by $\mathbf{f}_l(\cdot; \Theta^{(l)})$, and the hyperparameters of the prior by $\Psi_\theta, \theta \in \Theta$ and $\Psi := \lbrace \Psi_\theta, \theta \in \Theta \rbrace$. In addition, we denote by $L$ the number of layers and by $d_i$ the dimension of the output of the $i$'th layer, where $d_0$ is the dimension of the input data. For a random variable $\mathbf Z$ we write $p(\mathbf Z)$ for its distribution, which need not have a density.

\paragraph{Functional prior}
The prior on the parameters also induces a functional prior through
\begin{equation*}
    p(\mathbf{f}; \Psi) = \int p(\mathbf{f}|\Theta) p(\Theta; \Psi)d\Theta
\end{equation*}
Here, $p(\mathbf{f}|\Theta)$ is defined by the network architecture and, therefore, deterministic. However, this now enables us to change our view from parameters to functions.
\paragraph{Inference} The posterior is intractable due to the intractability of the marginal likelihood $p(\mathcal{D}) = \int p(\mathcal{D}|\Theta) p(\Theta) d\Theta$. We sample it with \sghmc \citep{chen2014stochasticgradienthamiltonianmonte}, which scales to large datasets.
\paragraph{Partially and fully stochastic neural networks}
We first introduce fully-connected neural networks (\mlps), where the $l$'th layer is defined as 
\begin{equation*}
    \mathbf{f}_l(\mathbf{x}; \Theta^{(l)}) =  \mathbf{W}_l \varphi(\mathbf{f}_{l-1}(\mathbf{x}; \Theta^{(l-1)})) + \mathbf{b}_l 
\end{equation*}
where $\varphi$ is a possibly non-linear function (\textit{activation function}), $\mathbf{f}_{l-1}$ is the output of the $(l-1)$'th layer, $\mathbf{W}_l\in \mathbb{R}^{d_l \times d_{l-1}}$ is the weight matrix of the $l$'th layer, and $\mathbf{b}_l \in \mathbb{R}^{d_l}$ is the bias of the $l$'th layer. For the first layer we set $\mathbf{f}_1(\mathbf{x};\Theta^{(1)}) = \mathbf{W}_1\mathbf{x} + \mathbf{b}_1$.
In contrast to fully stochastic neural networks, where all parameters are stochastic \citep{neal1996bnn}, partially stochastic neural networks have both deterministic and stochastic parameters \citep{sharma2023partiallystoch}. 

\paragraph{Deep weight factorization} 
\citet{Hoff2017Lasso} showed that $\ell_1$ regularization is equivalent to $\ell_2$ regularization of a two-factor Hadamard parameterization, which avoids the non-differentiability of the $\ell_1$ penalty. \citet{kolb2025dwf} extended this to depth $D > 2$ through:
\begin{equation}
    \min_{\boldsymbol{\omega}_1, \dots, \boldsymbol{\omega}_D} \mathcal{L} \left(\mathbf{x},\mathbf{y};\bigodot_{i = 1}^D \boldsymbol{\omega}_i\right) + \frac{\lambda}{D} \sum_{i=1}^D \lVert \boldsymbol{\omega}_i \rVert_2^2,
\label{eq:dwf}
\end{equation}
where $\odot$ denotes the element-wise Hadamard-product.
As \Cref{eq:dwf} inflates the dimensionality of the objective, the optimization is very sensitive to the initialization of the parameters as reported in \citet{kolb2025dwf}. 
\section{Related Works} 
\paragraph{Functional priors in Bayesian neural networks} The specification of priors is an important, yet difficult task as the prior has a direct impact on the performance of the neural network and should capture any prior knowledge about the data. For a thorough review of priors in Bayesian deep learning we refer the reader to the work of \citet{fortuin2022priorsbayesiandeeplearning}. A common trick, which was also mentioned by
\citet{fortuin2022priorsbayesiandeeplearning}, is to change the perspective from the parameter space to the functional space and impose a prior on the function induced by the neural network \citep{flam2017mapping, Wild2022functioninference}. 
\citet{tran2022needgoodfunctionalprior} showed that optimizing the functional prior induced by the network to emulate a \gp before sampling from the posterior can improve performance substantially.

\paragraph{Partially stochastic neural networks}
Fully stochastic neural networks are well-known and extensively researched \citep{MacKay1995BNNs, neal1996bnn}. For a comprehensive review of \bnns we refer the reader to \citep{arbel2023primerbayesianneuralnetworks}. 

The notion of partially stochastic neural networks, however, is a rather recent development driven by the work of \citet{sharma2023partiallystoch}. In this work, the authors have contributed theoretically and empirically to the development and justification of partial stochasticity by proving that many popular architectures have the property of being \ucdas; this means that under mild topological constraints, these models can approximate any conditional distribution. 
Since then, interest in partially stochastic neural networks, their application, and their properties has grown (e.g., \citep{calvoordonez2024partiallystochasticinfinitelydeep, allec2025psbnnapplicmaterials, Prabhudesai2023psbnnapplicmed}). 

\paragraph{Sparsity in neural networks}
The weak \citep{frankle2018lottery} and strong lottery ticket hypotheses \citep{Natale2024SLTH} give the justification for efforts to sparsify neural networks. 
One possible way to impose sparsity on the parameters is to perform automatic relevance determination, which is placing hyperpriors on the priors and shrinking scales continuously \citep{neal1996bnn}. So a parameter is rarely exactly deterministic and the split is never explicit.
Many other works, e.g., \citep{hubin2024SBNN, Wang2021SparseMCMC, Deng2019SBDL}, have focused on spike-and-slab priors \citep{Mitchell1988SpikeAndSlab}.
However, all of these approaches prune capacity of the network by setting parameters to zero; our approach is different. We make the prior scales and mean learnable parameters and we then optimize the functional prior to prune the prior scales. This makes the parameters deterministic and the split explicit.

\paragraph{Selection of the stochastic and deterministic parameters}
The choice between stochastic and deterministic parameters for partially stochastic neural networks is a relatively new task for model selection. Closest to our setting, \citet{andrade2025detstochweights} select the stochastic parameters from a \map estimate. Subnetwork inference \citep{Daxberger21SNI}, which we include as SNI-PSNN, belongs to the same family of \map-based selection methods. These methods first train a full network and select the split from its \map estimate. Our split is learned from the functional prior, which uses only the inputs and requires no trained model. Sparse subspace variational inference \citep{li2024ssvi} removes parameters from the network and thereby reduces its capacity, whereas our deterministic parameters remain free to be optimized during inference. \citet{rochussen2024structured} fixes a structured subset of parameters in order to break weight symmetries, which is complementary to learning the split.

\section{Methods} 
\label{sec:methods}
Throughout the paper, we fit the functional prior by minimizing the \mmd between the functional prior of the network and the target \gp (for an introduction see \citep{rasmussen2008gps}) at random measurement locations.  
Previous work used the (sliced) Wasserstein distance for this purpose \citep{tran2022needgoodfunctionalprior,Tran2021BAE}. The \mmd \citep{gretton2012MMD} is cheap to estimate from samples and allows the kernel to be chosen for the task. We specify the kernels used in our experiments in \Cref{sec:exp}.
We propose a novel way to find a split between stochastic and deterministic parameters by employing a \dwf parameterization \citep{kolb2025dwf} for the prior scales of the parameters. 
In this section, we detail our method while supporting it with theoretical results. All proofs are in \Cref{app:theoRe}.

\subsection{Learning the split}
\label{sec:pri_opt}
\textbf{Parameterization:} For every $\theta \in \Theta^{(l)}$ we introduce factors $(\omega_\theta^{i})_{i=1}^D \in \mathbb{R}^D$, their product $u_\theta = \prod_{i=1}^D\omega_\theta^{(i)}$ and $\sigma_\theta := H(u_\theta)$, $H(u) = (h(u) - h(0))^2$, $h(u) = \log(1+\exp(u))$. We refer to $\sigma_\theta$ as the prior scale of $\theta$.
We chose this reparameterization of the prior scales to ensure that the standard deviations are non-negative with $H(u)= 0 \iff u=0$. The prior is $\theta \sim \mathcal{N}(\mu_\theta, \sigma_\theta^2)$ for each parameter $\theta \in \Theta$, where both $\boldsymbol \mu$ and $\boldsymbol \sigma$ are trainable parameters. We initialize $\boldsymbol \mu = \mathbf 0$. With an odd activation and a zero-mean target process, $\boldsymbol \mu = \mathbf 0$ is a stationary point of the objective, and the learned means remain at zero in our experiments. We perform the \dwf on the prior scales only and initialize them according to \citep{kolb2025dwf}.

For measurement locations $\mathbf X = (\mathbf x_1, \dots, \mathbf x_N)$, let $P^{\boldsymbol\mu,\boldsymbol\sigma}_{\mathbf X}$ denote the distribution of $(\mathbf f(\mathbf x_1;\Theta), \dots, \mathbf f(\mathbf x_N;\Theta))$ under the prior with means $\boldsymbol\mu$ and scales $\boldsymbol\sigma$, and $Q_{\mathbf X}$ the corresponding marginal of the \gp. For a kernel $k$ we define
\[
\mathcal L_k(\boldsymbol\mu, \boldsymbol\sigma) := \mathbb E_{\mathbf X \sim \mathcal{G}^N}\bigl[\mathrm{MMD}_k^2\bigl(P^{\boldsymbol\mu,\boldsymbol\sigma}_{\mathbf X}, Q_{\mathbf X}\bigr)\bigr],
\]
where the generator $\mathcal{G}$ of the measurement locations is described below. In practice both distributions are replaced by the empirical measures of function samples (Remark~\ref{rmk:mmd_optimize}). We cast the prior optimization problem as
\[
\min_{\boldsymbol\mu, \boldsymbol\omega} \mathcal L(\boldsymbol\mu, \boldsymbol\omega) := \mathcal L_k(\boldsymbol\mu, H(\mathbf u)) + \sum_{l=1}^L \frac{\lambda_l}{D} \sum_{\theta \in \Theta^{(l)}} \sum_{i=1}^D \bigl(\omega_\theta^{(i)}\bigr)^2 .
\]

During the optimization, a parameter whose prior scale falls below a cutoff $c$ is set to be deterministic, so that it receives no gradient from the objective. After optimization, we apply the minimal repair of Corollary~\ref{cor:MinRep} whenever the certificate of Proposition~\ref{prop:certificate} fails. On the \texttt{UCI} datasets this changes at most 34 parameters, or 0.3\% of the network.

A natural question is to what extent $c$ affects sparsity. We show in \Cref{thm:no-small-vector} that at stationary points there is a gap in absolute values between the values that are set to $0$ and the surviving prior scales. If $c$ lies within this gap, changing $c$ has no effect on the result.
\citet{kolb2025dwf} have shown that performing \dwf of depth $D$ is equivalent to optimizing with a $\ell_{\frac{2}{D}}$-quasi norm. Since $2/D < 1$ for $D \geq 3$, the penalty is only a quasi-norm, as the triangle inequality fails, and it is no longer convex. Its derivative grows without bound as $|u_\theta| \to 0$, so small non-zero values are expensive and are pushed to exactly zero, which brings the penalty closer to $\ell_0$ and makes it sparser than $\ell_1$ (cf. Remark~\ref{rmk:effectivePenalty}). For the statistical properties of $\ell_p$ penalties with $p < 1$ we refer the reader to \citet{knight2000asymptotics} and \citet{chartrand2007exact}, and for their realization through deep factorizations to \citet{kolb2025dwf}.
Proposition~\ref{prop:quasiNorm} extends this to layer-specific $\lambda_l$, which yields the following theorem:
\begin{restatable}[No small non-zero scales]{theorem}{PriorScales}
\label{thm:no-small-vector}
Let $D\geq3$, let $\theta\in\Theta^{(l)}$. Let $\tilde{\mathcal L}(\mathbf u) := \mathcal L_k(\boldsymbol \mu, H(\mathbf u))$ as a function of $u_\theta$ with all other variables fixed, and suppose $|\tilde{\mathcal{L}}'(u_\theta)|\leq G_l$. If $(\omega_\theta^{(1)}, \dots, \omega_\theta^{(D)})$ is a stationary point of $\mathcal{L}$ in these factors, then either all factors vanish, in which case $u_\theta = \sigma_\theta = 0$ or
\[
|u_\theta| \geq t_l := \left(\frac{2\lambda_l}{DG_l} \right)^{\frac{D}{D-2}} \quad \text{and} \quad \sigma_\theta \geq H(-t_l) >0
\]
\end{restatable}
The bound holds at exact stationary points. An optimizer step stops at approximate stationarity, and $t_l$ shrinks as the bound $G_l$ on the gradient grows, so the gap is not guaranteed in practice. In our experiments the surviving scales extend down to the cutoff, so the split depends on the choice of $c$ in practice. The fitted model, however, is insensitive to $c$ over more than an order of magnitude (\Cref{fig:toy_cutoff}).
\begin{remark}
\label{rmk:prior_opt}
    For comparable $G_l$, choosing $\lambda_1<\lambda_2<\dots<\lambda_L$  enforces $t_1<t_2<\dots<t_L$: later layers are sparsified more aggressively.
\end{remark}
A consequence of this theorem is that $c < \min_lH(-t_l)$ removes only scales that are exactly zero at every stationary point. 

\citet{borgwardt2006integrating} showed that an \mmd with a polynomial kernel of degree $d$ matches moments up to degree $d$. For the degree-two kernel used in our experiments, this takes the following form.
\begin{restatable}[The \mmd reduced to moment matching]{proposition}{MMDProp}
\label{prop:mmd}
Let $k(\mathbf{x},\mathbf{y})=(1+\langle \mathbf{x},\mathbf{y}\rangle)^2$ on $\mathbb{R}^N$, and let $\mathbf{M}_P:=\boldsymbol{\Sigma}_P+\boldsymbol{\mu}_P\boldsymbol{\mu}_P^\top$ denote the second-moment matrix.
Then for any two distributions with finite second moments,
\[
\mathrm{MMD}_2^2(P,Q)\;=\;2\,\|\boldsymbol{\mu}_P-\boldsymbol{\mu}_Q\|_2^2\;+\;\|\mathbf{M}_P-\mathbf{M}_Q\|_F^2 .
\]
Here the subscript $2$ of $\mathrm{MMD}_2^2(P,Q)$ denotes the use of the polynomial kernel of degree 2.
\end{restatable}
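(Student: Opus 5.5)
The plan is to write the squared \mmd as $\|\mathbb E_P\phi(\mathbf X)-\mathbb E_Q\phi(\mathbf Y)\|^2$ for an explicit finite-dimensional feature map $\phi$ of the kernel $k(\mathbf x,\mathbf y)=(1+\langle\mathbf x,\mathbf y\rangle)^2$. Expanding the square gives
\[
k(\mathbf x,\mathbf y)=1+2\langle\mathbf x,\mathbf y\rangle+\langle\mathbf x,\mathbf y\rangle^2 .
\]
Since $\langle\mathbf x,\mathbf y\rangle^2=\langle \mathbf x\mathbf x^\top,\mathbf y\mathbf y^\top\rangle_F$, this equals $\langle\phi(\mathbf x),\phi(\mathbf y)\rangle$ for the feature map
\[
\phi(\mathbf x):=\bigl(1,\sqrt2\,\mathbf x,\mathbf x\mathbf x^\top\bigr)\in\mathbb R\times\mathbb R^N\times\mathbb R^{N\times N},
\]
equipped with the direct-sum inner product (Euclidean on the middle block, Frobenius on the last). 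This representation is what lets us bypass any abstract \gls{RKHS} argument.

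Next, I will use the standard identity
\[
\mathrm{MMD}_k^2(P,Q)=\mathbb E\,k(\mathbf X,\mathbf X')-2\,\mathbb E\,k(\mathbf X,\mathbf Y)+\mathbb E\,k(\mathbf Y,\mathbf Y'),
\]
with independent copies $\mathbf X,\mathbf X'\sim P$ and $\mathbf Y,\mathbf Y'\sim Q$. I substitute the feature representation and move expectations inside the inner products by Fubini. The main (if mild) obstacle is integrability: $|k(\mathbf x,\mathbf y)|\le(1+\|\mathbf x\|\|\mathbf y\|)^2$, and under independence the expectation of $\|\mathbf X\|^2\|\mathbf Y\|^2$ factorizes into $\mathbb E\|\mathbf X\|^2\,\mathbb E\|\mathbf Y\|^2<\infty$. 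So finite second moments suffice for every term to be finite and for the exchange to be justified. In particular $\phi$ is Bochner integrable, with mean embedding
\[
\mathbb E_P\phi=(1,\sqrt2\,\boldsymbol\mu_P,\mathbf M_P),
\]
and the three terms collapse to $\|\mathbb E_P\phi-\mathbb E_Q\phi\|^2$.

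Finally, I compute this norm blockwise. The constant blocks cancel. The linear block contributes $2\|\boldsymbol\mu_P-\boldsymbol\mu_Q\|_2^2$, and the quadratic block contributes $\|\mathbf M_P-\mathbf M_Q\|_F^2$ with $\mathbf M_P=\mathbb E_P[\mathbf X\mathbf X^\top]=\boldsymbol\Sigma_P+\boldsymbol\mu_P\boldsymbol\mu_P^\top$. Summing the two gives the claimed formula. I expect the whole argument to be routine once the feature map is written down; the only care needed is the moment condition above.
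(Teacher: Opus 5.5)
Your proposal is correct and matches the paper's proof essentially step for step: it uses the same feature map $(1,\sqrt{2}\mathbf{x},\mathbf{x}\mathbf{x}^\top)$, the same mean embedding $(1,\sqrt{2}\boldsymbol{\mu}_P,\mathbf{M}_P)$, and the same blockwise norm computation. The only difference is in the integrability check: you argue through the three-term kernel expansion and a product bound, whereas the paper checks $\mathbb{E}_P[\|\varphi(\mathbf{X})\|]=\mathbb{E}_P[\sqrt{k(\mathbf{X},\mathbf{X})}]=\mathbb{E}_P[1+\|\mathbf{X}\|^2]<\infty$ directly, and that identity is also the cleanest justification for your ``in particular'' Bochner-integrability claim.
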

The weights of the mean and the second-moment term are thus fixed by the kernel and need not be tuned. In our implementation the objective is scaled by $\frac{1}{2N}$, which only rescales $\boldsymbol\lambda$.

To estimate the objective we draw measurement locations from a generator $\mathcal{G}$ which mixes the empirical distribution of the training inputs and a uniform distribution over the input range as suggested by \citet{sun2019functional} and also done by \citet{tran2022needgoodfunctionalprior}: 
$\mathcal{G} = \rho_M\, \hat{\mathbb{P}}_\mathbf{X_{\text{train}}} + (1-\rho_M)\,\mathcal{U}(\mathbf{x_{\min}},\mathbf{x_{\max}})$, i.e. each location is a training input with probability $\rho_M$ and uniform otherwise. We use $\rho_M = 0.7$ for the experiments. The choice of $\rho_M$ determines where the functional prior is actually constrained: the uniform component forces the match to hold away from the training data, which is precisely the region in which out-of-distribution uncertainty is evaluated.

\subsection{Preserving expressiveness}
\label{subsec:presEx}
We analyze how masking affects the \ucda property of \citet{sharma2023partiallystoch} (\Cref{theoremA}) through two notions.
\begin{definition}[Bias coverage]
\label{def:bias_coverage}
Let $S:= \{\theta \in \Theta^{(1)}: \sigma_\theta >0 \}$ be the surviving set in the first layer. The \emph{bias coverage} is $U(S) := \#\{ j \in \{1, \dots, d_1 \}: b^{(1)}_j \in S \}$, and $T := \{j: \, \text{no first-layer parameter of unit $j$ lies in $S$}\}$ is the set of fully deterministic units.
\end{definition}
\begin{remark}
The definition counts biases rather than weights for a reason. A stochastic input weight contributes $\sigma_{j,i}  \varepsilon_{j,i} x_i$, which vanishes at $\boldsymbol x = \boldsymbol 0$, so the scale $s_j$ in the proof of Proposition~\ref{prop:certificate} would not be bounded away from zero and the recovery map would fail there. A stochastic bias enters additively and keeps $s_j \geq \sigma^b_j >0$ for every input.
\end{remark}
\begin{definition}[\ucda of a masked network]
\label{def:mask_ucda}
Fix a mask. The masked network is a \ucda if for every $(\mathbf{X},\mathbf{Y})$ as in \Cref{theoremA} with a continuous generator and every $\varepsilon >0$, there exist values of the deterministic parameters and a Gaussian distribution with independent nondegenerate coordinates on $\Theta_s$ such that $\sup_{\mathbf{x} \in \mathcal{X}}d_{\mathrm{BL}}(p(\mathbf{f}(\mathbf{x};\Theta)), p(\mathbf{Y}|\mathbf{X}=\mathbf{x}))\leq \varepsilon$, where $d_{\mathrm{BL}}$ denotes the bounded-Lipschitz metric.
\end{definition}
The definition only states that the mask does not restrict expressiveness, not that the approximation is found in practice. It is hard to check directly, but the following certificate can be checked in linear time.
\begin{restatable}[Deterministic certificate]{proposition}{DetCert}
\label{prop:certificate}
Let $\mathcal{X} \subset \mathbb{R}^{d_0}$ be compact and let $\mathbf{f}$ have $L-1 \geq 2$ hidden layers with a continuous, strictly monotone, non-polynomial activation $\varphi$. Stochastic parameters may occur in any layer. If 
\[
U(S) \geq d_L \quad \text{and} \quad |T|\geq d_0,
\]
then the masked network is a \ucda. Both conditions are checkable in $\mathcal{O}(|\Theta^{(1)}|)$ time. 
\end{restatable}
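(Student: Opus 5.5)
The plan is to build the hidden representation by hand and then invoke the universal approximation theorem of \citet{Leshno1993UAT} for the downstream layers, as in \Cref{theoremA}. Write $d:=d_L$ for the target dimension. The first layer should produce, at each input $\mathbf x$, a representation from which both $\mathbf x$ and a vector of $d$ independent nondegenerate noise variables can be recovered continuously. The later layers, which have at least one hidden layer and free deterministic parameters, then approximate the continuous map $(\mathbf x,\boldsymbol\varepsilon)\mapsto g(\mathbf x,\boldsymbol\varepsilon)$. Here $g$ is a generator with $g(\mathbf x,\boldsymbol\varepsilon)\sim p(\mathbf Y\mid\mathbf X=\mathbf x)$, whose existence with continuity is assumed in \Cref{theoremA}.

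\textbf{Construction.} Choose $d$ units $j_1,\dots,j_d$ whose first-layer biases are stochastic; they exist because $U(S)\ge d_L$. Set all deterministic weights of these units to zero. Give every stochastic first-layer weight of any unit a vanishingly small variance, or treat it by continuity. The preactivation of unit $j_k$ is then $\mu^b_{j_k}+\sigma^b_{j_k}\varepsilon_{k}$ plus a term that can be made uniformly small on the compact set $\mathcal X$. Choose $d_0$ units in $T$; they exist because $|T|\ge d_0$. Their first-layer parameters are all deterministic, so set their weight rows to the standard basis vectors and their biases to zero. These units output $\varphi(x_i)$, and because $\varphi$ is continuous and strictly monotone this map is invertible. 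All remaining units get zero weights and zero deterministic biases, so they are constant. Stochastic parameters in later layers are given small variances, so by continuity they perturb the output only slightly in $d_{\mathrm{BL}}$ uniformly over $\mathcal X$. Alternatively, their variances can be taken to zero in the limit, since the definition only requires nondegenerate Gaussians and not a prescribed size.

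\textbf{Approximation.} The hidden representation $\mathbf h(\mathbf x,\boldsymbol\varepsilon)$ is a homeomorphic image of $(\mathbf x,\boldsymbol\varepsilon')$, where $\boldsymbol\varepsilon'=\varphi(\mu+\sigma\boldsymbol\varepsilon)$ has a continuous, strictly monotone recovery map. Compose $g$ with this inverse. Because the noise is unbounded, truncate it to a large compact box that carries probability $1-\eta$. On the resulting compact set, apply the universal approximation theorem to the remaining $L-1\ge 2$ layers; their first hidden layer can take $\mathbf h$ as input after one affine map. This yields a uniform approximation within $\eta$ on the box. Next, bound $d_{\mathrm{BL}}$ by the bounded-Lipschitz coupling bound $\mathbb E\min(2,\|\cdot\|)\le \eta+2\eta$, uniformly in $\mathbf x$. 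Finally, absorb the perturbations from the small stochastic weights by continuity and compactness.

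\textbf{Main obstacle and complexity.} The main obstacle is controlling the stochastic parameters outside the chosen units, both stochastic input weights and later-layer stochastic parameters. The definition forces them to be nondegenerate, so they cannot be zeroed. I would take variances $\delta\to0$ and use uniform continuity of the network on a compact high-probability set to get a perturbation that is uniform in $\mathbf x$. The remark on bias coverage explains why biases, not weights, supply the noise: weight noise vanishes at $\mathbf x=\mathbf 0$ and would make the recovery map singular there. For the complexity claim, a single pass over $\Theta^{(1)}$ counts the stochastic biases, giving $U(S)$, and marks the units that have any stochastic parameter, giving $|T|$. This takes $\mathcal O(|\Theta^{(1)}|)$ time.
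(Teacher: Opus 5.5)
Your proof is correct, but it treats the first layer differently from the paper.

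\textbf{The paper's route.} The paper does not shrink the first-layer stochastic weights of the $d_L$ noise units. It keeps them at arbitrary positive scales and absorbs them exactly. It writes the noise of unit $j$ as $\xi_j=\sum_i\sigma_{j,i}\varepsilon_{j,i}x_i+\sigma^b_j\varepsilon^b_j$ and normalizes it to $Z_j:=\xi_j/s_j(\mathbf X)$, where $s_j(\mathbf x)^2=\sum_i\sigma_{j,i}^2x_i^2+(\sigma^b_j)^2\ge(\sigma^b_j)^2>0$. Conditionally on $\mathbf X=\mathbf x$, the vector $\mathbf Z_J$ is $\mathcal N(\mathbf 0,\mathbf I_{d_L})$, so it is exactly standard Gaussian and independent of $\mathbf X$. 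The recovery map first obtains $\mathbf x$ from a rank-$d_0$ block of units in $T$ by a pseudo-inverse. It then obtains $Z_j=(u_j-\bar{\mathbf W}_{j,\cdot}\mathbf x-\bar b_j)/s_j(\mathbf x)$. The paper then checks conditions (1)--(4) of Theorem~\ref{theoremA} and cites it, rather than re-deriving it. A small variance $v$ is needed only for stochastic parameters in layers $\ge 2$. That perturbation is removed by the same compactness and uniform-continuity argument you describe.

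\textbf{Your route.} You zero the deterministic weights of the noise units and give every stochastic first-layer weight a vanishing variance. You then fold this perturbation into the downstream one and rebuild the mechanism of Theorem~\ref{theoremA} inline: truncation, the theorem of Leshno et al., and the coupling bound on $d_{\mathrm{BL}}$.

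\textbf{What each buys.} Your version is more elementary and handles all nuisance noise in a single continuity step. The paper's version is exact in the first layer for any first-layer scales. That is closer to the two-stage method, where those scales are learned and then held fixed. It also makes the role of bias coverage explicit: it is the lower bound on $s_j$ that keeps the recovery map continuous everywhere, including at $\mathbf x=\mathbf 0$. In your version, weight noise is simply switched off.

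\textbf{One detail to fix.} Your claim that all remaining units are constant is not right. $U(S)$ may exceed $d_L$, and in the experiments it does so by a wide margin, so unused units can still carry stochastic biases. Your small-variance device covers only weights. You can fix this in either of two ways:
\begin{itemize}
\item shrink those bias variances as well; or
\item do what the paper does for the set $E$: set the means of the outgoing weights of these units into layer 2 to zero, so they drop out in the limit $v\to 0$.
\end{itemize}
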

\begin{remark}
If the deterministic first-layer weights are fixed, for example after training, the second condition becomes $\operatorname{rank}(\bar{\mathbf{W}}^{(1)}_T) = d_0$, where $\mathbf{\bar{W}}_T^{(1)}$ are the first-layer weights of the units in $T$, and Proposition~\ref{Prop:WeightRepair} gives the minimal repair for this case.
\end{remark}
Note that $\mathcal{X}$ need not be the raw input space. If a fixed continuous map precedes the network, the proposition applies unchanged with $d_0$ the dimension of its output.

If the certificate of Proposition~\ref{prop:certificate} fails, a minimal repair restores it by making at most $d_L$ first-layer biases stochastic and, if needed, the $d_0 - |T|$ cheapest units outside these $d_L$ fully deterministic (Corollary~\ref{cor:MinRep}).

\subsection{Inference}
The prior optimization provides three inputs to the inference stage. The split fixes the stochastic dimension $m := |\Theta_s|$ and hence determines whether the model can represent the target conditional at all, the surviving scales $\hat{\boldsymbol\sigma}_s$ define the prior over the stochastic block, and the means $\hat{\boldsymbol\mu}$ center the prior of both blocks. Let $(\hat{\boldsymbol\mu}, \hat{\boldsymbol\sigma})$ be the output of the prior optimization and set $k := |\Theta_d|$. We write $\ell(\boldsymbol\theta_d, \boldsymbol\theta_s) := \log p(\mathcal D \mid \boldsymbol\theta_d, \boldsymbol\theta_s)$, $Z(\boldsymbol\theta_d) := p(\mathcal D \mid \boldsymbol\theta_d) = \int e^{\ell(\boldsymbol\theta_d, \boldsymbol\theta_s)} p(\boldsymbol\theta_s)\,\mathrm d\boldsymbol\theta_s$ and $J_\tau(\boldsymbol\theta_d) := \log Z(\boldsymbol\theta_d) - \frac{1}{2\tau^2}\|\boldsymbol\theta_d - \hat{\boldsymbol\mu}_d\|_2^2$, where $\tau = \gamma \log(2)$ is governed by a hyperparameter $\gamma$. We use this reparameterization so that $\gamma = 1$ corresponds to the value of the softplus function at $0$. The hyperparameter $\gamma$ controls how far the deterministic parameters
may move from their prior mean. By Proposition~\ref{prop:gamma}, $\gamma \downarrow 0$
freezes them at $\hat{\boldsymbol\mu}_d$, a finite $\gamma$ gives a type-II \map estimate, and $\gamma \uparrow \infty$ gives type-II maximum
likelihood. 

The deterministic block is optimized with \texttt{Adam} \citep{Kingma2014Adam} under the prior $\mathcal N(\hat{\boldsymbol\mu}_d, \tau^2 \mathbf I_k)$, and the stochastic block is sampled with \sghmc \citep{chen2014stochasticgradienthamiltonianmonte} under the prior $\prod_{\theta \in \Theta_s} \mathcal N(\hat\mu_\theta, \hat\sigma_\theta^2)$. Let $\Pi_\tau$ denote the joint posterior of $(\boldsymbol\theta_d, \boldsymbol\theta_s)$ under these two priors.

\begin{restatable}[The near-degenerate approximation is a type-II \map estimate]{theorem}{InfScheme}
\label{thm:var}
Let $\mathcal{P}_G$ be the set of densities $q_s$, with finite entropy and $\mathbb{E}_{q_s}[G+|\log p(\boldsymbol \theta_s)| + |\ell (\boldsymbol 0,\boldsymbol \theta_s)|] < \infty$, where $G$ is the envelope from Assumption~\ref{ass:reg} (ii), and let $q_\varepsilon := \mathcal{N}(\boldsymbol \theta_d, \varepsilon^2\boldsymbol I_k)$. Under Assumption~\ref{ass:reg}, for all $\boldsymbol \theta_d \in \mathbb{R}^k$ and $q_s \in \mathcal{P}_G$,
\begin{align*}
\mathrm{KL}(q_\varepsilon \otimes q_s \| \Pi_\tau) &= -\frac{k}{2} \log (2\pi e \varepsilon^2) + F(\boldsymbol \theta_d, q_s) + R_\varepsilon \\
F(\boldsymbol \theta_d, q_s) &= \mathrm{KL}(q_s\| p(\cdot| \mathcal{D}, \boldsymbol \theta_d)) - J_\tau(\boldsymbol \theta_d) + c,
\end{align*}
with $c \in \mathbb{R}$ being a constant, $|R_\varepsilon| \leq C(\boldsymbol \theta_d) \mathbb{E}_{ q_s}[G]\varepsilon + \frac{k \varepsilon^2}{2 \tau^2}$ for $\varepsilon \leq 1$ and $C(\boldsymbol \theta_d)$ locally bounded. The minimizers of $F$ are $q^\star_s = p(\cdot | \mathcal{D}, \boldsymbol \theta^\star_d)$ with $\boldsymbol \theta^\star_d \in \arg \max J_\tau$.
\end{restatable}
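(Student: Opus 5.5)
The plan is to expand the KL divergence directly and isolate the part that depends on $\varepsilon$. Write the unnormalized joint posterior as
\[
\Pi_\tau(\boldsymbol\theta_d,\boldsymbol\theta_s)=\frac{1}{\mathcal Z_\tau}\,e^{\ell(\boldsymbol\theta_d,\boldsymbol\theta_s)}\,p(\boldsymbol\theta_s)\,\mathcal N(\boldsymbol\theta_d;\hat{\boldsymbol\mu}_d,\tau^2\mathbf I_k),
\]
with $\mathcal Z_\tau$ the normalizing constant. Then
\[
\mathrm{KL}(q_\varepsilon\otimes q_s\|\Pi_\tau)=-\mathcal H(q_\varepsilon)-\mathcal H(q_s)-\mathbb E_{q_\varepsilon\otimes q_s}\bigl[\ell+\log p(\boldsymbol\theta_s)+\log\mathcal N(\boldsymbol\theta_d;\hat{\boldsymbol\mu}_d,\tau^2\mathbf I_k)\bigr]+\log\mathcal Z_\tau .
\]
The Gaussian entropy gives exactly $-\frac k2\log(2\pi e\varepsilon^2)$. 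The Gaussian prior term is quadratic, so its expectation under $q_\varepsilon$ equals its value at $\boldsymbol\theta_d$ plus $\frac{k\varepsilon^2}{2\tau^2}$. This gives the second piece of $R_\varepsilon$, and the remaining $-\frac k2\log(2\pi\tau^2)$ is absorbed into $c$. Finiteness of every term follows from $q_s\in\mathcal P_G$: finite entropy, integrable $\log p(\boldsymbol\theta_s)$, and $\ell(\boldsymbol 0,\cdot)$ integrable. Combined with the envelope $G$, the last condition makes $\ell(\boldsymbol\theta_d,\cdot)$ integrable for every $\boldsymbol\theta_d$, by integrating the gradient along the segment from $\boldsymbol 0$.

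The main obstacle is the likelihood term, that is, bounding
\[
\bigl|\mathbb E_{q_\varepsilon\otimes q_s}[\ell(\boldsymbol\theta_d',\boldsymbol\theta_s)]-\mathbb E_{q_s}[\ell(\boldsymbol\theta_d,\boldsymbol\theta_s)]\bigr|.
\]
I would take Assumption~\ref{ass:reg}(ii) to give a local Lipschitz envelope of the form $|\ell(\boldsymbol\theta_d',\boldsymbol\theta_s)-\ell(\boldsymbol\theta_d,\boldsymbol\theta_s)|\le C(\boldsymbol\theta_d)\,G(\boldsymbol\theta_s)\,\|\boldsymbol\theta_d'-\boldsymbol\theta_d\|$, possibly with $C$ growing polynomially in $\|\boldsymbol\theta_d'\|$. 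If the assumption is instead phrased as $\|\nabla_{\boldsymbol\theta_d}\ell\|\le G(\boldsymbol\theta_s)\,c(\boldsymbol\theta_d)$, apply the mean value theorem along the segment. Integrating against $q_\varepsilon$ and using $\mathbb E\|\boldsymbol\theta_d'-\boldsymbol\theta_d\|\le\sqrt k\,\varepsilon$ for $\varepsilon\le1$ (Gaussian moments control any polynomial growth, which is absorbed into a locally bounded $C$) gives the bound $C(\boldsymbol\theta_d)\,\mathbb E_{q_s}[G]\,\varepsilon$. Fubini is justified by the same integrability.

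After this, the $\varepsilon$-independent remainder is
\[
-\mathcal H(q_s)-\mathbb E_{q_s}[\ell(\boldsymbol\theta_d,\cdot)+\log p]+\frac{1}{2\tau^2}\|\boldsymbol\theta_d-\hat{\boldsymbol\mu}_d\|^2+\text{const}.
\]
Add and subtract $\log Z(\boldsymbol\theta_d)$. Since $p(\boldsymbol\theta_s\mid\mathcal D,\boldsymbol\theta_d)=e^{\ell}p(\boldsymbol\theta_s)/Z(\boldsymbol\theta_d)$, the first three terms become $\mathrm{KL}(q_s\|p(\cdot\mid\mathcal D,\boldsymbol\theta_d))-\log Z(\boldsymbol\theta_d)$. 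Together with the quadratic term this is exactly $F(\boldsymbol\theta_d,q_s)=\mathrm{KL}(q_s\|p(\cdot\mid\mathcal D,\boldsymbol\theta_d))-J_\tau(\boldsymbol\theta_d)+c$, where $c$ collects $\log\mathcal Z_\tau$ and $-\frac k2\log(2\pi\tau^2)$. The requirement $Z(\boldsymbol\theta_d)\in(0,\infty)$ comes from the assumption, since $\ell$ is a log-likelihood and $Z\le\sup e^{\ell}$ under bounded likelihood.

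For the minimizers, note that the KL term is nonnegative and vanishes iff $q_s=p(\cdot\mid\mathcal D,\boldsymbol\theta_d)$. This conditional posterior lies in $\mathcal P_G$ under the assumption, which must be checked or assumed. Hence $\inf_{q_s}F(\boldsymbol\theta_d,q_s)=-J_\tau(\boldsymbol\theta_d)+c$. Minimizing jointly therefore means choosing $\boldsymbol\theta_d^\star\in\arg\max J_\tau$ and $q_s^\star=p(\cdot\mid\mathcal D,\boldsymbol\theta_d^\star)$. Conversely, any minimizer must have zero KL term and maximize $J_\tau$, so it has exactly this form.
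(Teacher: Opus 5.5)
Your proposal is correct and follows the paper's proof essentially step for step: expand the KL, use the exact Gaussian entropy and the exact quadratic prior term (which gives $\frac{k\varepsilon^2}{2\tau^2}$), bound the likelihood perturbation by the mean value theorem with the envelope $G(\boldsymbol\theta_s)(1+\|\boldsymbol\theta_d\|+\varepsilon\|\boldsymbol W\|)^r$, then add and subtract $\log Z(\boldsymbol\theta_d)$ to obtain $F$ and its minimizers. The one point you leave open, that $p(\cdot\mid\mathcal D,\boldsymbol\theta_d)\in\mathcal P_G$, the paper simply attributes to Assumption~\ref{ass:reg}(iii), and it does hold. The reasons are that $e^{\ell}\le B$, that $G\in L^1(p(\boldsymbol\theta_s))$, and that $|x|e^{x}$ is bounded on $(-\infty,\log B]$, combined with the same comparison of $\ell(\boldsymbol 0,\cdot)$ and $\ell(\boldsymbol\theta_d,\cdot)$ that you already use.
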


By Fisher's identity, $\nabla_{\boldsymbol\theta_d} \log Z(\boldsymbol\theta_d) = \mathbb E_{p(\boldsymbol\theta_s \mid \mathcal D, \boldsymbol\theta_d)}[\nabla_{\boldsymbol\theta_d} \ell(\boldsymbol\theta_d, \boldsymbol\theta_s)]$ \citep{Capp2005InferenceIH}, so the hybrid scheme is stochastic approximation EM for $J_\tau$, with the sampler acting as a Monte Carlo E-step. Because the target of the sampler moves with $\boldsymbol\theta_d$, the step sizes matter. The next result shows that if sampler and optimizer share a step size, the bound on the tracking error does not vanish as the step size shrinks.
\begin{restatable}[Coupled step sizes yield a non-vanishing tracking error]{corollary}{CoupledStepSize}
\label{cor:coupled}
Assume in addition to the setting of \Cref{thm:track} that $\rho (h) = 1-ch + o (h)$ with $c>0$ and $\delta_t \leq G\eta$ for all $t$. Then for all $t$
\[
e_t \leq \rho (h)^te_0 + \frac{L_\pi G\eta}{ch + o(h)}
\]
(a) Coupled. $h = \eta$. The stationarity part of the bound tends to $\frac{L_\pi G}{c}$ as $\eta \to 0$.
(b) Decoupled. If $h \to 0$ and $\frac{\eta}{h} \to 0$, then $\sup_t[W_1(\boldsymbol \mu_t^{\boldsymbol \theta_s}, \pi_{\boldsymbol \theta_{d_t}})  - \rho(h)^te_0] \to 0$
\end{restatable}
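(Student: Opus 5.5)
The plan is to derive the bound from the one-step recursion that \Cref{thm:track} provides. I expect that recursion to have the form $e_{t+1} \leq \rho(h)\, e_t + L_\pi \delta_t$. Here $e_t$ is the Wasserstein-$1$ tracking error between the sampler law $\boldsymbol\mu_t^{\boldsymbol\theta_s}$ and the conditional target $\pi_{\boldsymbol\theta_{d_t}}$. The constant $\rho(h)$ is the one-step contraction of the sampler with step size $h$, $L_\pi$ is the Lipschitz constant of $\boldsymbol\theta_d \mapsto \pi_{\boldsymbol\theta_d}$ in $W_1$, and $\delta_t$ is the displacement of the deterministic block in one optimizer step.

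First I substitute the assumption $\delta_t \leq G\eta$ and unroll the recursion by induction:
\[
e_t \leq \rho(h)^t e_0 + L_\pi G\eta \sum_{s=0}^{t-1}\rho(h)^s \leq \rho(h)^t e_0 + \frac{L_\pi G\eta}{1-\rho(h)}.
\]
For $h$ small enough that $0<\rho(h)<1$, the expansion $\rho(h)=1-ch+o(h)$ gives $1-\rho(h) = ch + o(h)$, where the $o(h)$ term carries the sign convention of the statement. This establishes the displayed bound for all $t$.

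For part (a), I set $h=\eta$. The stationarity term becomes
\[
\frac{L_\pi G\eta}{c\eta+o(\eta)} = \frac{L_\pi G}{c+o(1)} \to \frac{L_\pi G}{c}
\]
as $\eta\to0$. The limit is strictly positive, so shrinking the shared step size does not make this part of the bound vanish.

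For part (b), I write the stationarity term as
\[
\frac{L_\pi G\,(\eta/h)}{c+o(h)/h}.
\]
As $h\to0$ the ratio $o(h)/h$ tends to $0$, so the denominator tends to $c>0$, while the numerator tends to $0$ because $\eta/h\to0$. The term therefore tends to zero. I then bound $W_1(\boldsymbol\mu_t^{\boldsymbol\theta_s},\pi_{\boldsymbol\theta_{d_t}})$ by $e_t$; this is immediate if $e_t$ is defined as exactly that quantity. Subtracting $\rho(h)^te_0$ and taking the supremum over $t$ leaves at most this vanishing term. This gives the upper half of the claimed limit, and I read the statement as asking for that direction.

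The main obstacle is matching this recursion to the exact form of \Cref{thm:track}. If that theorem bounds the shift of the target through a sum of displacements, or uses $\delta_t$ with a different index, I would adapt the geometric-sum step accordingly. A second point to handle is choosing $h$ small enough that $\rho(h)\in(0,1)$, so that the geometric series converges uniformly in $t$.
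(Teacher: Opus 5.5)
Your derivation of the displayed bound and of part (a) matches the paper's proof exactly. The recursion $e_{t+1}\le\rho e_t+L_\pi\delta_t$ is the one in \Cref{thm:track}, so inserting $\delta_j\le G\eta$ and summing the geometric series gives the bound, and setting $h=\eta$ gives (a).

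The gap is in part (b), at the step where you bound $W_1(\boldsymbol\mu_t^{\boldsymbol\theta_s},\pi_{\boldsymbol\theta_{d_t}})$ by $e_t$. In \Cref{thm:track}, $e_t := W_1(\mu_t,\pi^h_{\boldsymbol\theta_{d,t}})$. Here $\mu_t$ is the law of the full sampler state $(\boldsymbol\theta_{s,t},v_t)$, and $\pi^h_{\boldsymbol\theta_d}$ is the invariant law of the \emph{discretized} kernel $P^h_{\boldsymbol\theta_d}$, not the target $\pi_{\boldsymbol\theta_d}$. These two laws differ by a discretization bias that is generally nonzero. In the paper's own Langevin example (Proposition~\ref{prop:BoundAttained}), $\pi^h_{\theta_d}=\mathcal N(\theta_d,1/(1-h/2))$, whereas $\pi_{\theta_d}=\mathcal N(\theta_d,1)$. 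So the inequality $W_1(\boldsymbol\mu_t^{\boldsymbol\theta_s},\pi_{\boldsymbol\theta_{d_t}})\le e_t$ fails in general.

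What you need is the third bound of \Cref{thm:track}. Marginalize, using that projection onto $\boldsymbol\theta_s$ is $1$-Lipschitz, and then apply (C4). This gives $W_1(\boldsymbol\mu_t^{\boldsymbol\theta_s},\pi_{\boldsymbol\theta_{d_t}})\le e_t+C_1h$, and hence
\[
\sup_t\bigl[W_1(\boldsymbol\mu_t^{\boldsymbol\theta_s},\pi_{\boldsymbol\theta_{d_t}})-\rho(h)^te_0\bigr]\le\frac{L_\pi G\eta}{ch+o(h)}+C_1h\to 0,
\]
which is the paper's argument. This is also where the hypothesis $h\to0$ does its real work. With $h$ fixed and $\eta/h\to0$, the tracking term already vanishes, but the bias $C_1h$ does not. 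In your version $h\to0$ served only to control $o(h)/h$, which hides the need for this extra term.
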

This bound is attained for a Gaussian example (Proposition~\ref{prop:BoundAttained}) and therefore sharp. We therefore let the optimizer step size decay faster than the sampler step size (\Cref{App:experimental_details}).
\section{Experiments} 
\label{sec:exp}
Unless stated otherwise, we use a \mlp with two hidden layers of 100 units and a $\tanh$ activation. For regression we use \texttt{CONCRETE}, \texttt{WINE}, \texttt{BIKESHARING} and \texttt{BOSTON}, and for classification \texttt{BANKNOTE}, \texttt{HTRU2}, \texttt{ONLINESHOPPERS} and \texttt{CREDIT}. All of these datasets come from the \texttt{UCI} suite \citep{kelly2023uci}. For each dataset we use ten random splits into 90\% training and 10\% test data and report the mean and standard deviation over the splits. We evaluate regression by \rmse, \nll and \ece, and classification by \acc, \nll and \ece. Details are given in \Cref{App:experimental_details}. For the prior optimization we use the \mmd with the degree-two polynomial kernel $k(\mathbf x, \mathbf y) = (1 + \langle \mathbf x, \mathbf y \rangle)^2$, which matches the mean and the second moment of the function values (Proposition~\ref{prop:mmd}).
\subsection{Learned Split}
\begin{figure}[ht]
    \centering
    \includegraphics[width=\linewidth]{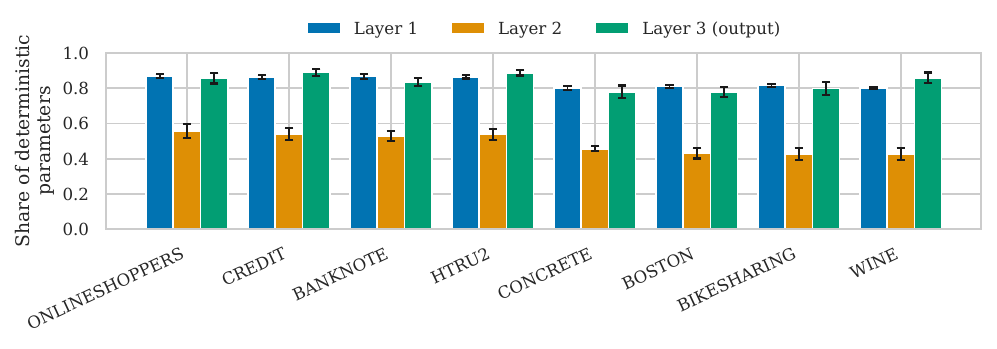}
    \caption{Share of deterministic parameters per layer for DWF-PSNN before repair, with mean and standard deviation over ten splits}
    \label{fig:learned_split}
\end{figure}
\Cref{fig:learned_split} shows the learned split. It is stable across data splits, keeps the first and the output layer largely deterministic and leaves about half of the middle layer stochastic. The coverage condition holds in every run. The condition $|T| \ge d_0$ fails on datasets with many input features, and the minimal repair then changes between 1 and 34 parameters (\Cref{App:LearnedSplit}).

\subsection{Conditions of the certificate}
\begin{figure}[ht]
\centering
\includegraphics[width=\linewidth]{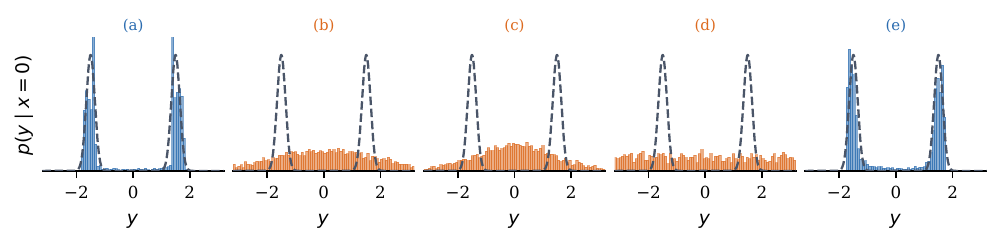}
\caption{Predictive distribution at the bimodal input, with the true conditional dashed and certified configurations in blue. (a) Unconstrained reference, (b) output layer randomized, (c) no nonlinearity, (d) every weight randomized with fixed scales, (e) one randomized bias.}
\label{fig:ucda_toy}
\end{figure}
The certificate constrains a masked network through bias coverage, through the number of fully deterministic units, and through the activation. \Cref{fig:ucda_toy} shows one configuration per condition on a target whose conditional distribution is bimodal, so that any Gaussian conditional is excluded by construction. In the reference (a) the noise enters as an additional input and every weight is deterministic, so the map is unconstrained and the reachable error is limited only by optimization. A single randomized bias in an admitted position (e) comes close to that reference. Randomizing the output layer (b) leaves the bias coverage at zero and fails, with sixty-four randomized parameters as much as with one. Randomizing every weight with fixed prior scales (d) fails from the opposite side, since no fully deterministic units remain. Removing the nonlinearity (c) fails while count and position are correct. The three conditions are therefore not vacuous and are needed together. \Cref{tab:ucda_main} numerically supports the findings by providing the mean Wasserstein-1 distance. 

\begin{table}[ht]
\centering
\small
\caption{Mean Wasserstein-1
distance to the true conditional over nine held out inputs, with the
standard error over five data seeds. The sampling floor is the distance
between two independent samples from the true conditional at this sample
size, and the point predictor is what a
model reaches by matching the conditional median.}
\label{tab:ucda_main}
\begin{tabular}{ccc}
\toprule
variant & condition violated & $W_1$ \\
\midrule
sampling floor  & --- & .020  \\
point predictor & --- & .681  \\
\midrule
noise input, $d_L$   & reference, none &  $.053 \pm .004$  \\
layer 1, $d_L$   & none       & $.074 \pm .002$ \\
\midrule
layer 1, $d_L$, weight & vanishes at $x=0$   & $.284 \pm .002$  \\
output layer, $d_L$        & bias coverage         & $.568 \pm .002$ \\
output layer, all        & bias coverage        & $.495 \pm .004$ \\
layer 1, $d_L$, linear & activation            & $.935 \pm .002$  \\
full                     & deterministic units              & $5.411 \pm .008$ \\
\midrule
deterministic & no randomness & $.686 \pm .002$ \\
\bottomrule
\end{tabular}
\end{table}
We have added a more detailed version of \Cref{tab:ucda_main} in \Cref{app:ucda} (\Cref{tab:ucda}). There we change the notation from layer 1 to first layer.

\begin{figure}[ht]
\centering
\includegraphics[width=\linewidth]{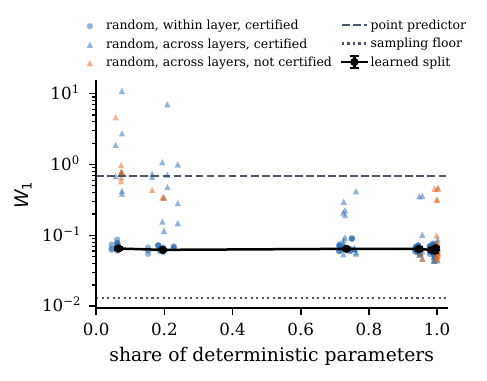}
\caption{Learned and random splits on the bimodal target over five data seeds. Random masks permute the learned scales within each layer (circles) or across all layers (triangles) and are colored by whether they satisfy the certificate.}
\label{fig:toy_budget_certificate}
\end{figure}
\begin{figure}[ht]
\centering
\includegraphics[width=\linewidth]{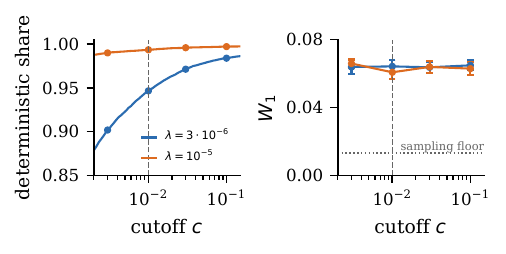}
\caption{Cutoff applied after the optimization on the bimodal target, over five data seeds. Left, share of deterministic parameters. Right, distance to the true conditional after refitting with the thresholded scales.}
\label{fig:toy_cutoff}
\end{figure}
Randomizing up to sixty-four parameters of the output layer never approaches the certified configuration (\Cref{fig:ucda-count} in \Cref{app:ucda}), so the count of stochastic parameters does not substitute for their position.
\Cref{fig:toy_budget_certificate} varies the budget instead. The learned split stays at $W_1 \approx 0.06$ between 6.5\% and 99.6\% deterministic parameters, and permuting its scales within a layer has no effect, since the units of a layer are exchangeable. Permuting them across layers is worse by up to two orders of magnitude and often worse than the fully deterministic model. The learned split thus matters through how it allocates stochasticity across layers. The certificate is sufficient but not necessary and leaves the scales free, so it does not separate these masks with fixed scales cleanly. 
\Cref{fig:toy_cutoff} varies the cutoff applied after the optimization. The prior scales do not separate into two groups, so the share of deterministic parameters grows with $c$, from 90.2\% at $c = 0.003$ for $\lambda = 3 \cdot 10^{-6}$ to 99.7\% at $c = 0.1$ for $\lambda = 1 \cdot 10^{-5}$. The distance to the true conditional nevertheless stays between 0.061 and 0.066 for all cutoffs and both values of $\lambda$, so the scales removed by a larger cutoff are too small to affect the fitted model.

\subsection{The role of $\gamma$}
We study the regimes of Proposition~\ref{prop:gamma} on \texttt{CONCRETE}.
\begin{table}[ht]
\centering
\caption{Results on \texttt{CONCRETE} for regimes (ii) and (iii) (mean $\pm$ std), with the drift $\delta := \|\boldsymbol\theta_d^{(T)} - \boldsymbol{\hat\mu}_d\|$ of the deterministic parameters. The sampling hyperparameters differ from those of the benchmark (\Cref{App:experimental_details}).}
\label{tab:regimes}
\small
\setlength{\tabcolsep}{4pt}
\begin{tabular}{lccc}
\toprule
$\gamma$ & $0.03$ & $2.5$ & $10^{13}$ \\
\midrule
\rmse ($\downarrow$) & .316 $\pm$ .033 & .296 $\pm$ .029 & \textbf{.291 $\pm$ .027} \\
\nll ($\downarrow$)  & .274 $\pm$ .095 & .205 $\pm$ .074 & \textbf{.193 $\pm$ .066} \\
\ece ($\downarrow$)  & \textbf{.085 $\pm$ .059} & .115 $\pm$ .083 & .119 $\pm$ .074 \\
$\delta$             & .04 $\pm$ .01 & 23.97 $\pm$ .48 & 76.94 $\pm$ 4.41 \\
\bottomrule
\end{tabular}
\end{table}
Freezing the deterministic parameters at $\boldsymbol{\hat\mu}_d = \mathbf 0$ in regime (i) gives an \rmse of $.997 \pm .001$ and an \nll of $4.74 \pm .02$, since all deterministic parameters are then zero. \Cref{tab:regimes} shows regimes (ii) and (iii). As $\gamma$ grows, the deterministic parameters move further from their prior mean, and \rmse and \nll improve monotonically. For $\gamma = 0.03$ the drift stays small, in line with Corollary~\ref{cor:DriftRadius}. The \ece behaves differently. It is lowest for $\gamma = 0.03$ and already close to its value in regime (iii) for $\gamma = 2.5$, although all differences in \ece lie within one standard deviation. Small values of $\gamma$ thus trade a little predictive accuracy for better calibration, while every finite $\gamma$ improves substantially on regime (i).
\subsection{Prior optimization}
\begin{figure}[ht]
    \centering
    \includegraphics[width=\linewidth]{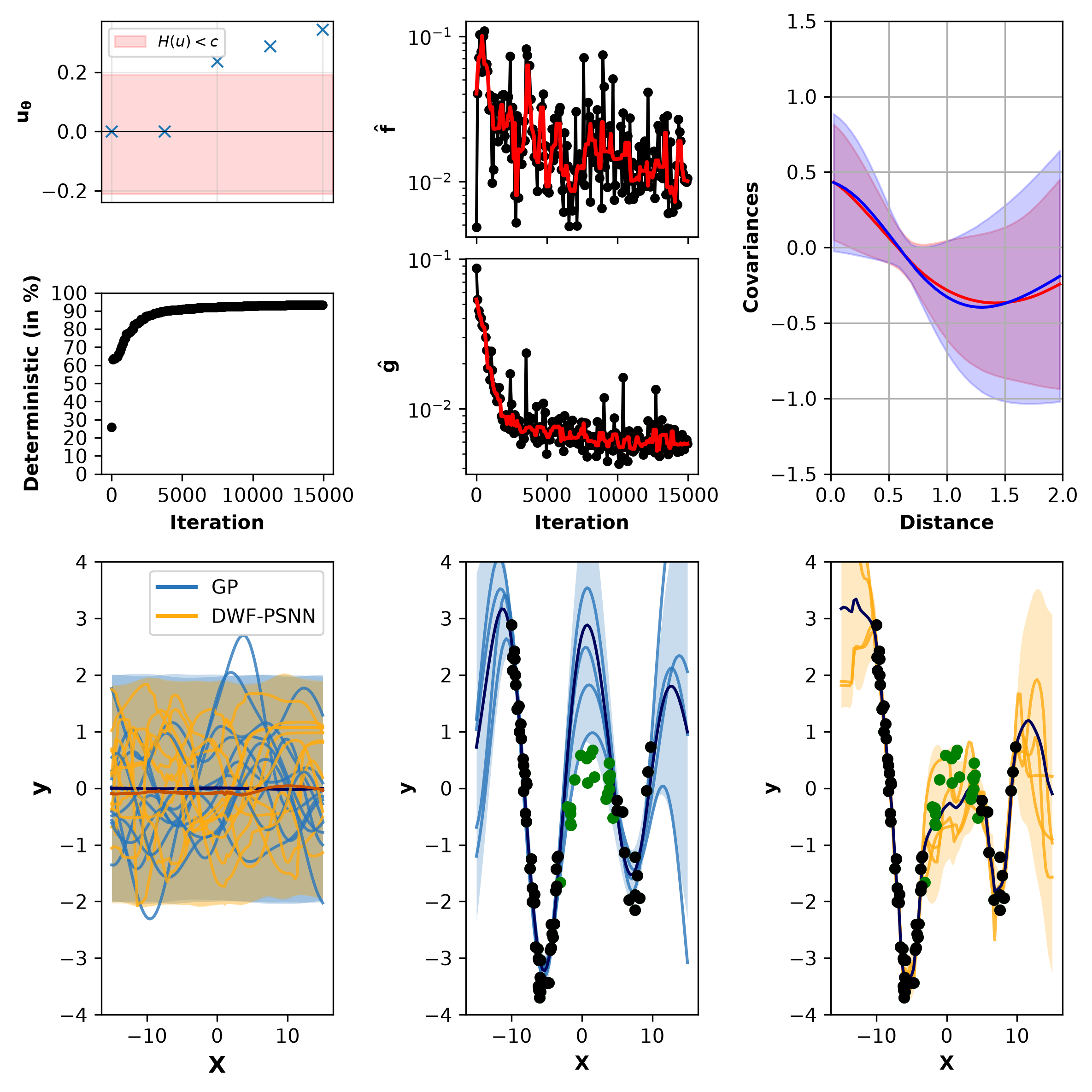}
    \caption{Top (from left to right): the deterministic fraction against iterations and the smallest surviving and the largest pruned prior scales; the mean ($\hat{f})$ and the uncentered second-moment $(\hat{g})$ terms on a logarithmic scale;  the covariogram of the optimized prior against the target \gp. Bottom (from left to right): DWF-PSNN prior and \gp sample paths ; the \gp posterior; the DWF-PSNN posterior}
    \label{fig:prior_opt}
\end{figure}
Here, we run the prior optimization on a 1D toy dataset. The data is drawn from a \gp and we artificially introduce a gap in the middle.
\Cref{fig:prior_opt} shows that the objective and both moment terms converge. The sample paths of the prior and the \gp still differ in smoothness, which reflects the non-Gaussian part that the objective does not target (cf. Lemma~\ref{prop:pythagoras}). The posterior nevertheless gives sensible uncertainty away from the data.

\subsection{\texttt{UCI} benchmarks}
We compare our approach (\textbf{DWF-PSNN}) to a fully stochastic network, where the functional prior is optimized (\textbf{ALS-BNN}); a partially stochastic network, where the first layer is stochastic, the subsequent layers are deterministic, and the functional prior is optimized (\textbf{FLS-BNN}); a partially stochastic neural network that makes the same overall share of parameters deterministic as DWF-PSNN, chosen uniformly at random across all layers and without repair (\textbf{RAND-PSNN}); the partially stochastic neural network as introduced in \citet{Daxberger21SNI} (\textbf{SNI-PSNN}), which represents \map-based selection of the stochastic parameters. 
On these benchmarks, whose likelihoods are Gaussian or categorical, RAND-PSNN performs on par with DWF-PSNN, so the allocation of stochasticity matters less here than on the bimodal target of \Cref{fig:toy_budget_certificate}. We refer the reader to \Cref{App:AddRes} for the detailed results and \Cref{App:experimental_details} for more information on the experimental details. \Cref{tab:mean_ranks} shows that DWF-PSNN ranks at least as well as the fully stochastic ALS-BNN in all six combinations, although it keeps about half of its parameters deterministic.
\begin{table}[ht]
\centering
\caption{Mean ranks across the classification and regression datasets, computed from the mean performance on each dataset with tied ranks averaged. Lower ranks indicate better performance. The Nemenyi critical difference (CD) at $\alpha=0.05$ is $3.05$ for both tasks ($N=4$, $k=5$). The only pairwise difference that exceeds it is between RAND-PSNN and SNI-PSNN in classification \nll.}
\label{tab:mean_ranks}
\small
\setlength{\tabcolsep}{6pt}
\begin{tabular}{lccc}
\toprule
\multicolumn{4}{c}{\textbf{Classification ($N=4$)}} \\
\midrule
Method & \acc ($\uparrow$) & \nll ($\downarrow$) & \ece ($\downarrow$) \\
\midrule
DWF-PSNN  & \textbf{2.38} & 2.63 & 2.88 \\
ALS-BNN   & 2.63 & 2.63 & 3.13 \\
FLS-BNN   & 2.88 & 2.88 & \textbf{2.50} \\
RAND-PSNN & \textbf{2.38} & \textbf{1.88} & 2.63 \\
SNI-PSNN  & 4.75 & 5.00 & 3.88 \\
\midrule
Friedman $p$ & 0.092 & 0.058 & 0.736 \\
\addlinespace[4pt]
\midrule
\multicolumn{4}{c}{\textbf{Regression ($N=4$)}} \\
\midrule
Method & \rmse ($\downarrow$) & \nll ($\downarrow$) & \ece ($\downarrow$) \\
\midrule
DWF-PSNN  & \textbf{2.00} & \textbf{2.25} & 2.50 \\
ALS-BNN   & 3.25 & \textbf{2.25} & 2.75 \\
FLS-BNN   & 3.38 & 3.50 & 3.50 \\
RAND-PSNN & 2.63 & 3.00 & \textbf{2.25} \\
SNI-PSNN  & 3.75 & 4.00 & 4.00 \\
\midrule
Friedman $p$ & 0.543 & 0.434 & 0.493 \\
\bottomrule
\end{tabular}
\end{table}

\begin{table}[ht]
\centering
\caption{Certificate on the UCI datasets. Share of splits in which the learned mask satisfies the certificate before repair, mean number of parameters changed by the repair, and share of 2{,}000 random masks with the learned budget that satisfy the certificate.}
\label{tab:certificate_uci}
\small
\setlength{\tabcolsep}{4pt}
\begin{tabular}{lcccc}
\toprule
Dataset & $d_0$ & learned & repair & random \\
\midrule
\texttt{BANKNOTE}       & 4  & 1.00 & 0.0  & 1.00 \\
\texttt{HTRU2}          & 8  & 1.00 & 0.0  & 1.00 \\
\texttt{CONCRETE}       & 8  & 0.90 & 0.1  & 0.97 \\
\texttt{WINE}           & 10 & 0.00 & 5.5  & 0.09 \\
\texttt{BIKESHARING}    & 12 & 0.00 & 7.3  & 0.01 \\
\texttt{BOSTON}         & 13 & 0.00 & 7.8  & 0.00 \\
\texttt{ONLINESHOPPERS} & 17 & 0.00 & 11.7 & 0.00 \\
\texttt{CREDIT}         & 23 & 0.00 & 28.1 & 0.00 \\
\bottomrule
\end{tabular}
\end{table}

\Cref{tab:certificate_uci} shows that the certificate fails on the datasets with many input features, both for the learned masks before repair and for random masks with the same budget. The repair changes on average at most 28.1 parameters per split and never more than 34.
\section{Conclusion}
Learning with neural networks in a partially stochastic framework requires determining whether each parameter should be treated as deterministic or stochastic. 
These choices affect guarantees of universal approximation and performance.  
In this paper, we proposed a novel approach to learn such a split by applying deep weight factorization to the prior scales, while matching the functional prior to a Gaussian process with an \mmd objective. 
A certificate that is cheap to check guarantees universal conditional density approximation, and a minimal repair restores it when it fails. We characterized the inference scheme as a stochastic approximation for a type-II \map objective. Then we showed that  coupled step sizes of the sampler and the optimizer can leave a persistent tracking error. Our experiments show that the learned split allocates stochasticity where it matters and that the method is competitive on \texttt{UCI} benchmarks. 

\subsection{Limitations and Future Work}

\paragraph{Beyond \mlps.} By the observation after Proposition~\ref{prop:certificate}, the certificate covers the head of a convolutional or transformer model with a fixed stack below it. Applying the split there is a natural next step, but Corollary~\ref{cor:MinRep} needs a head wider than the feature dimension, which most architectures do not satisfy. The stack itself, with its shared weights, stays outside our analysis.

\paragraph{Closed-loop dynamics.} \Cref{thm:track} and Corollary~\ref{cor:coupled} assume that the optimizer path does not depend on the sampler noise. In our algorithm the \texttt{Adam} updates use the current samples, so the analysis covers the tracking part of the error but not the feedback between the two blocks. Extending the bounds to this closed-loop setting is left for future work.

\subsection{AI Use Statement}
In this work, we used generative AI tools to help formulate mathematical claims, to provide ingredients for the proofs and assist in writing them, to give feedback on the methodology and the experiments, and to help implement the methods. Additionally, we used generative AI tools to edit the text for readability and to write and edit code. We have reviewed all AI-assisted work. All proofs were checked line by line by the authors, the code was tested for correctness, and the text was checked for plagiarism. We take responsibility for the final content of this work, including text, claims, or artifacts produced with the aid of generative AI.

\bibliography{linhard}
\bibliographystyle{apalike}
\newpage
\onecolumn
\appendix
\section{Theoretical Results and Proofs}
\label{app:theoRe}
\subsection{Learning the split}
\label{app:theoReSplit}
\begin{proposition}[Layer-weighted quasi-norm]
\label{prop:quasiNorm}
Since $\mathcal{L}_k$ depends on the factors only through $\mathbf u$,
\[
\min_{\boldsymbol{\omega}} \mathcal{L}(\boldsymbol{\mu}, \boldsymbol{\omega}) = \min_{\mathbf{u}} \mathcal{L}_k (\boldsymbol{\mu}, H(\mathbf{u})) + \sum_{l=1}^L\lambda_l \sum_{\theta \in \Theta^{(l)}} |u_\theta|^{\frac{2}{D}}
\]
\end{proposition}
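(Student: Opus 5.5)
The plan is to reduce the claim to a scalar problem for each parameter and then assemble the pieces.

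\textbf{Decoupling.} Since $\mathcal L_k(\boldsymbol\mu, H(\mathbf u))$ depends on the factors only through the products $u_\theta=\prod_i\omega_\theta^{(i)}$, and the penalty is separable across $\theta$, I would minimize over the factors in two stages. First fix $\mathbf u$ and minimize the penalty over all factorizations realizing it. Then minimize over $\mathbf u$.

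\textbf{Inner problem.} Fix $\theta\in\Theta^{(l)}$ and a target value $u_\theta$. I would show
\[
\min_{\prod_{i=1}^D\omega^{(i)}=u_\theta}\ \frac{\lambda_l}{D}\sum_{i=1}^D(\omega^{(i)})^2=\lambda_l|u_\theta|^{2/D}.
\]
The argument is the AM--GM inequality applied to the squares:
\[
\frac1D\sum_i(\omega^{(i)})^2\geq\Bigl(\prod_i(\omega^{(i)})^2\Bigr)^{1/D}=|u_\theta|^{2/D}.
\]
Equality holds when all $|\omega^{(i)}|=|u_\theta|^{1/D}$, with signs chosen so that the product equals $u_\theta$. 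For $u_\theta=0$ the minimum is $0$, attained by all factors zero, which is consistent with the formula. Note that $\lambda_l$ enters only as a multiplicative constant within layer $l$. This is the only point where the layer-specific weights matter, and it is why the layer-weighted version follows from the single-$\lambda$ result of \citet{kolb2025dwf}.

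\textbf{Assembly.} Write $\Omega(\mathbf u)$ for the set of factor tuples realizing $\mathbf u$. The map from factors to products is surjective onto $\mathbb R^{|\Theta|}$, so the sets $\Omega(\mathbf u)$ partition the factor space. Hence
\[
\min_{\boldsymbol\omega}\mathcal L=\min_{\mathbf u}\Bigl[\mathcal L_k(\boldsymbol\mu,H(\mathbf u))+\min_{\boldsymbol\omega\in\Omega(\mathbf u)}\mathrm{pen}(\boldsymbol\omega)\Bigr].
\]
Because the penalty separates over $\theta$, the inner minimum is the sum of the scalar minima from the previous step, and this gives the stated identity.

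\textbf{Expected difficulty.} The main subtlety is existence of minimizers, i.e.\ whether ``$\min$'' rather than ``$\inf$'' is justified. I would state the identity for infima. I would also note that any minimizer in the factor space maps to a minimizer in $\mathbf u$, and conversely any minimizer $\mathbf u^\star$ lifts to a minimizer via the balanced factorization. So whenever one side attains its minimum, so does the other, and the values agree.
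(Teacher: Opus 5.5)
Your proposal is correct and follows essentially the same route as the paper. Both apply AM--GM to the squared factors, obtaining the inner minimum $\lambda_l|u_\theta|^{2/D}$ at the balanced factorization, and then sum over $\theta$ and over the layers. Your decomposition of the factor space into fibers and your remark on infima versus minima make explicit the reduction step that the paper leaves implicit.
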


\begin{proof}
Fix $\theta$ and $u_\theta = \omega$. By the AM-GM inequality $\frac{1}{D}\sum_i (\omega^{(i)})^2 \geq (\Pi_i(\omega^{(i)})^2)^{\frac{1}{D}} = |\omega|^{\frac{2}{D}}$, with equality if and only if $|\omega^{(i)}| = |\omega|^{\frac{1}{D}}$ for all $i$. Hence, the inner minimum of $\sum_i(\omega^{(i)})^2$ under the constraint $\prod_i \omega^{(i)} = \omega$ equals $D|\omega|^{\frac{2}{D}}$. Multiplying by $\frac{\lambda_l}{D}$ and summing over $\theta$ and $l$ gives the claim.
\end{proof}

\begin{remark}[Effective penalty of the prior scales]
\label{rmk:effectivePenalty}
Since for our reparameterization we have $\sigma_\theta \approx \frac{u_\theta^2}{4}$ near zero, the penalty $\lambda_l |u_\theta|^{\frac{2}{D}}$ behaves like $\lambda_l (4\sigma_\theta)^{\frac{1}{D}}$. On the prior scale itself the regularizer is therefore of $\ell_{\frac{1}{D}}$ type, which is sparser than the $\ell_{\frac{2}{D}}$ quasi-norm acting on $\boldsymbol u$.
\end{remark}

\PriorScales*
\begin{proof}
Write $v := u_\theta$ and $g:= |\tilde{\mathcal{L}}'(v)|$. Stationarity in the $i$-th factor reads $\tilde{\mathcal{L}}'(v)\Pi_{j\neq i}v^{(j)} + \frac{2\lambda_l}{D}v^{(i)} = 0$. Multiplying by $v^{(i)}$ gives $\tilde{\mathcal{L}}'(v)v + \frac{2\lambda_l}{D}(v^{(i)})^2 = 0$, so
\[
(v^{(i)})^2 = -\frac{D}{2\lambda_l}\tilde{\mathcal{L}}'(v)v =: s^2 \quad \text{for every $i$,} \quad |v| = s^D.
\]
If $s=0$, all factors vanish. Otherwise $s>0$, and $s^2 = \frac{D}{2\lambda_l}g|v| = \frac{Dg}{2\lambda_l}s^D$ forces $g>0$ and $s^{D-2} = \frac{2\lambda_l}{Dg}$. Hence, $|v| = s^D = (\frac{2\lambda_l}{Dg})^{\frac{D}{D-2}}$. This expression is decreasing in $g$, so $g \leq G_l$ yields $|v| \geq t_l$. The bound on $\sigma_\theta$ follows from $\min_{|u|\geq t} H(u) = H(-t)$.
\end{proof}

The following Lemma is a special case of the Pythagorean identity onto exponential families \citep{Csisizar1975KL}:
\begin{restatable}[Pythagorean decomposition]{lemma}{PythIdentity}
\label{prop:pythagoras}
Let $P$ be a distribution on $\mathbb{R}^N$ with finite second moments, mean $\boldsymbol \mu$
and positive definite covariance matrix $\boldsymbol \Sigma$, and let $\tilde P := \mathcal{N}(\boldsymbol \mu,\boldsymbol \Sigma)$. Then for every Gaussian $Q = \mathcal{N}(\boldsymbol \mu_Q, \boldsymbol \Sigma_Q)$ with positive definite covariance matrix,
\[
\mathrm{KL}( P\|Q) \;=\; \underbrace{\mathrm{KL}( P\|\tilde P)}_{\text{non-Gaussianity of } P}
\;+\; \underbrace{\mathrm{KL}(\tilde P\|Q)}_{\text{moment discrepancy}} .
\]
In particular, if $P$ has no Lebesgue density, both $\mathrm{KL}(P\|\tilde P)$ and $\mathrm{KL}(P\| Q)$ are infinite.
\end{restatable}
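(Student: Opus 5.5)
The plan is to split into two cases according to whether $P$ is absolutely continuous with respect to Lebesgue measure, and in the density case to reduce the identity to a statement about expectations of Gaussian log-densities, which depend only on first and second moments.

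\textbf{Density case.} Suppose $P$ has a Lebesgue density $p$. Since $\tilde P$ and $Q$ have everywhere positive densities $\tilde p$ and $q$, we write
\[
\mathrm{KL}(P\|Q)=\mathbb E_P[\log p-\log \tilde p]+\mathbb E_P[\log \tilde p-\log q].
\]
The first term is $\mathrm{KL}(P\|\tilde P)$. For the second term, $\log \tilde p-\log q$ is a polynomial of degree at most two in $\mathbf z$:
\[
-\tfrac12(\mathbf z-\boldsymbol\mu)^\top\boldsymbol\Sigma^{-1}(\mathbf z-\boldsymbol\mu)+\tfrac12(\mathbf z-\boldsymbol\mu_Q)^\top\boldsymbol\Sigma_Q^{-1}(\mathbf z-\boldsymbol\mu_Q)+\text{const}.
\]
Because $P$ has finite second moments, this function is $P$-integrable. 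Its expectation depends on $P$ only through $\boldsymbol\mu$ and $\boldsymbol\Sigma$, via $\mathbb E[(\mathbf z-\mathbf a)^\top \mathbf A(\mathbf z-\mathbf a)]=\operatorname{tr}(\mathbf A\boldsymbol\Sigma)+(\boldsymbol\mu-\mathbf a)^\top \mathbf A(\boldsymbol\mu-\mathbf a)$. Hence $\mathbb E_P[\log\tilde p-\log q]=\mathbb E_{\tilde P}[\log\tilde p-\log q]=\mathrm{KL}(\tilde P\|Q)$, which is finite.

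\textbf{The main obstacle} is justifying the split when $\mathrm{KL}(P\|\tilde P)$ might be $+\infty$, or when $\mathbb E_P[\log p]$ is not well defined. The plan is to avoid separating $\mathbb E_P[\log p]$ at all. We write $\log(p/q)=\log(p/\tilde p)+r$, where $r:=\log(\tilde p/q)$ is integrable. The negative part of $\log(p/\tilde p)$ is integrable, since $\mathbb E_P[(\log(\tilde p/p))_+]\le \mathbb E_P[\tilde p/p]\le 1$, so $\mathbb E_P[\log(p/\tilde p)]$ is well defined in $(-\infty,\infty]$. Adding the integrable $r$ then gives the identity in $[0,\infty]$, including the case where both sides are infinite.

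\textbf{Non-density case.} If $P$ has no Lebesgue density, then $P$ is not absolutely continuous with respect to Lebesgue measure. Since $\tilde P$ and $Q$ are equivalent to Lebesgue measure (their covariances are positive definite), $P\not\ll\tilde P$ and $P\not\ll Q$. By definition of KL, both $\mathrm{KL}(P\|\tilde P)$ and $\mathrm{KL}(P\|Q)$ are $+\infty$, so the identity holds with the convention $\infty+\text{finite}=\infty$. This proves the final claim.
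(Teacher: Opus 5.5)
Your proof is correct and follows the paper's argument: a case split on absolute continuity with respect to Lebesgue measure, and in the density case the observation that Gaussian log-densities are quadratic, so their $P$-expectations depend only on the first two moments. The only difference is technical: the paper isolates $\int p\log p$ and uses the maximum-entropy property of $\tilde P$ to place it in $(-\infty,\infty]$. You instead keep $\log(p/\tilde p)$ intact and bound its negative part with $(\log x)_+\le x$, which is slightly cleaner because it never needs the differential entropy of $P$ to be defined on its own.
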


\begin{proof}
$\tilde P$ and $Q$ are equivalent to Lebesgue measure. If $P$ is not absolutely continuous, then $\mathrm{KL}(P\|Q) = \mathrm{KL}(P\|\tilde{P}) = \infty$ and the identity holds trivially. Otherwise let $p$ be the density of $P$, $q$ the density of $Q$, and $\tilde{p}$ be the density of $\tilde{P}$. Since $\log q$ and $\log \tilde{p}$ are quadratic polynomials and $P, \tilde{P}$ share their first two moments, $\mathbb{E}_P[\log q] = \mathbb{E}_{\tilde{P}}[\log q]$ and also $\mathbb{E}_P[\log \tilde{p}]=\mathbb{E}_{\tilde{P}}[\log \tilde{p}]$, and all four quantities are finite. The integral $\int p \log p$ lies in $(-\infty, \infty]$ because the entropy of $P$ is bounded by that of $\tilde{P}$. If it equals $\infty$, both sides are infinite. Otherwise
\[
\mathrm{KL}(P\|\tilde{P}) + \mathrm{KL}(\tilde{P}\|Q) = \int p \log p - \mathbb{E}_P[\log \tilde{p}] + \mathbb{E}_{\tilde{P}}[\log \tilde{p}] - \mathbb{E}_{\tilde{P}}[\log q] = \int p \log p - \mathbb{E}_P[\log q] = \mathrm{KL}(P\|Q)
\]
\end{proof}

\begin{remark}[Why moments]
\label{rem:moments}
When the number of stochastic parameters drops below the number of measurement points, the network output has no Lebesgue density. By Lemma~\ref{prop:pythagoras} both divergences are then infinite, and only the moment discrepancy can be controlled. This motivates an objective based on the first two moments. By Proposition~\ref{prop:mmd}, $\mathrm{MMD}_2^2$ vanishes
exactly when the first two moments agree, so $\mathcal L_2$ is such an
objective. For deeper networks the non-Gaussianity term can be substantial
\citep{vladimirova2019understanding, yaida2020non}, and a
characteristic kernel could target it.
\end{remark}

\begin{remark}[Hierarchical targets]
\label{rem:hierarchical}
When the kernel hyperparameters of the target process carry hyperpriors, as in our experiments, the target $Q$ is a mixture of Gaussians and not itself Gaussian. Proposition~\ref{prop:mmd} holds for arbitrary distributions with finite second moments and is unaffected. Lemma~\ref{prop:pythagoras} applies with $Q$ replaced by its Gaussian moment projection $\tilde Q = \mathcal N(\boldsymbol \mu_Q, \boldsymbol \Sigma_Q)$, which is exactly the Gaussian whose moments our objective matches.
\end{remark}

\MMDProp*

\begin{proof}
Expanding, $k(\mathbf{x},\mathbf{y}) = 1 + 2 \langle \mathbf{x},\mathbf{y} \rangle + \langle \mathbf{ x},\mathbf{ y} \rangle^2$ and $\langle \mathbf{x}, \mathbf{y} \rangle^2 = \langle \mathbf{x}\mathbf{ x}^\top,\mathbf{y}\mathbf{y}^\top \rangle_F$. Hence, $k(\mathbf{x},\mathbf{y}) = \langle \varphi(\mathbf{x}), \varphi(\mathbf{y}) \rangle$ with the feature map $\varphi(\mathbf{x}) = (1, \sqrt{2}\mathbf{x}, \mathbf{x}\mathbf{x}^\top) \in \mathbb{R} \times \mathbb{R}^N \times \mathbb{R}^{N \times N}$, where the last component carries the Frobenius inner product. Since $\mathbb{E}_P[\|\varphi(\mathbf{X})\|] = \mathbb{E}_P[\sqrt{k(\mathbf{X},\mathbf{X})}] = \mathbb{E}_P[1 + \|\mathbf{X}\|^2] < \infty$, the mean embedding $m_P = \mathbb{E}_P[\varphi(\mathbf{X})] = (1, \sqrt{2}\boldsymbol{\mu}_P, \mathbf{M}_P)$ exists, and 
\[
\mathrm{MMD}^2_2 (P,Q) = \| \mathbf{m}_P - \mathbf{m}_Q \|^2 = 2 \| \boldsymbol{\mu}_P - \boldsymbol{\mu}_Q \|_2^2 + \| \mathbf{M}_P - \mathbf{M}_Q \|^2_F
\]
\end{proof}

\begin{remark}[What is optimized in practice]
\label{rmk:mmd_optimize}
$N$ is the number of measurement locations. In every step we draw
$\mathbf X$ from $\mathcal G^N$ and compute the empirical moments from
function samples. The loss of this step is
$\frac{\mathrm{MMD}_2^2(\hat P_{\mathbf X}, \hat Q_{\mathbf X})}{2N}$ for the
empirical measures $\hat P_{\mathbf X}$ and $\hat Q_{\mathbf X}$, which is
the biased \mmd estimator scaled by $\frac{1}{2N}$. If the \gp
moments are computed exactly, replace $\hat Q_{\mathbf X}$ by
$Q_{\mathbf X}$. Since the locations are redrawn in every step, the
optimized quantity is
$\mathbb E_{\mathbf X \sim \mathcal G^N}\bigl[\frac{\mathrm{MMD}_2^2(\hat P_{\mathbf X}, \hat Q_{\mathbf X})}{2N}\bigr]$,
which is the empirical counterpart of $\frac{\mathcal L_2}{2N}$.
\end{remark}

\subsection{Preserving expressiveness}
\label{app:theoReExpr}
For simplicity, we repeat the \ucda theorem by \citep{sharma2023partiallystoch} again:

\begin{restatable}[Universal Conditional Distribution with Finite Stochasticity]{theorem}{Sharma}
\label{theoremA}
Let $\boldsymbol X$ take values in a compact $\mathcal{X} \subset \mathbb{R}^{d_0}$ and $\boldsymbol Y$ in $\mathcal{Y} \subset \mathbb{R}^{d_L}$. Let $\boldsymbol f_\upsilon (\boldsymbol Z,\boldsymbol x) $ be a network with deterministic parameters $\upsilon$ and Gaussian noise $\boldsymbol Z \in \mathbb{R}^m$ independent of $\boldsymbol X$ with finite mean and variance. Suppose that for some hidden layer $h_\ell$, after restricting $\boldsymbol Z$ to a ball $ \|\boldsymbol Z \| \leq R$, \textup{(1)} $\boldsymbol Z$ and $\boldsymbol x$ are fully input by this layer, \textup{(2)} $\boldsymbol h_\ell$ ranges over a compact set, \textup{(3)} there is a deterministic, continuous, surjective $\boldsymbol g$ with $\boldsymbol g(\boldsymbol h_\ell) = (\boldsymbol Z,\boldsymbol x)$, and \textup{(4)} the downstream network is a universal approximator in the sense of \citep{Leshno1993UAT}. If $\boldsymbol Y|\boldsymbol X$ admits a continuous generator, then $\boldsymbol f_\upsilon$ approximates $\boldsymbol Y|\boldsymbol X$ arbitrarily well.
\end{restatable}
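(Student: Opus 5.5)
The plan is a noise-outsourcing argument. The continuous generator is written in terms of the network's own Gaussian noise $\boldsymbol Z$, and then the downstream network is used to approximate it uniformly on a compact set. Concretely, I read ``$\boldsymbol Y|\boldsymbol X$ admits a continuous generator'' as the existence of a continuous $\boldsymbol G:\mathbb{R}^m\times\mathcal X\to\mathcal Y$ such that $\boldsymbol G(\boldsymbol Z,\boldsymbol x)$ has law $p(\boldsymbol Y|\boldsymbol X=\boldsymbol x)$ for every $\boldsymbol x\in\mathcal X$. If the generator is originally given with some other continuous noise, I would first push the Gaussian $\boldsymbol Z$ through coordinatewise Gaussian CDFs, and then through a continuous quantile-type map, to produce that noise. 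Whether this reduction preserves continuity jointly in $(\boldsymbol z,\boldsymbol x)$ is the point I expect to need the most care. The error metric is the bounded-Lipschitz metric $d_{\mathrm{BL}}$, uniformly over $\boldsymbol x$, as in Definition~\ref{def:mask_ucda}.

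\textbf{Step 1 (truncation).} Fix $\varepsilon>0$ and choose $R$ with $\Pr(\|\boldsymbol Z\|>R)\le\varepsilon$. On the ball $B_R$, hypotheses (1)--(3) give a continuous surjection $\boldsymbol g$ from the compact range $K$ of $\boldsymbol h_\ell$ onto $B_R\times\mathcal X$. The target map on the level of this hidden layer is then $\boldsymbol G\circ\boldsymbol g:K\to\mathcal Y$, which is continuous on a compact set.

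\textbf{Step 2 (uniform approximation).} By hypothesis (4) and \citet{Leshno1993UAT}, there are downstream deterministic parameters $\upsilon$ such that the downstream network $\boldsymbol N_\upsilon$ satisfies $\sup_{\boldsymbol h\in K}\|\boldsymbol N_\upsilon(\boldsymbol h)-\boldsymbol G(\boldsymbol g(\boldsymbol h))\|\le\varepsilon$. Since $\boldsymbol h_\ell=\boldsymbol h_\ell(\boldsymbol Z,\boldsymbol x)$ and $\boldsymbol g(\boldsymbol h_\ell(\boldsymbol Z,\boldsymbol x))=(\boldsymbol Z,\boldsymbol x)$, this yields
\[
\|\boldsymbol f_\upsilon(\boldsymbol Z,\boldsymbol x)-\boldsymbol G(\boldsymbol Z,\boldsymbol x)\|\le\varepsilon \quad\text{whenever } \|\boldsymbol Z\|\le R,
\]
uniformly in $\boldsymbol x$.

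\textbf{Step 3 (coupling bound).} Couple the two laws through the same $\boldsymbol Z$. For any test function $\phi$ with $\|\phi\|_\infty+\mathrm{Lip}(\phi)\le1$,
\[
\bigl|\mathbb E\,\phi(\boldsymbol f_\upsilon(\boldsymbol Z,\boldsymbol x))-\mathbb E\,\phi(\boldsymbol G(\boldsymbol Z,\boldsymbol x))\bigr|\le \varepsilon+2\Pr(\|\boldsymbol Z\|>R)\le 3\varepsilon .
\]
Here the first term comes from the ball, via the Lipschitz bound, and the second from its complement, via the sup bound. Taking the supremum over $\phi$ and over $\boldsymbol x\in\mathcal X$ gives $\sup_{\boldsymbol x}d_{\mathrm{BL}}\le3\varepsilon$, which proves the claim.

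The routine parts are the truncation and the coupling estimate. The main obstacle is Step 1's interface between the generator and the layer $\boldsymbol h_\ell$. One must ensure that the composite $\boldsymbol G\circ\boldsymbol g$ is well defined and continuous on all of $K$, not only on the image of the truncated inputs. Restricting $\boldsymbol Z$ to the ball must also not change which generator is being approximated. I would handle this by defining everything on the compact set $B_R\times\mathcal X$ from the outset, and by noting that hypothesis (3) makes $\boldsymbol g$ a left inverse of the layer map there.
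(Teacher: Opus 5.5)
The paper does not prove this theorem itself; it quotes it from \citet{sharma2023partiallystoch}. Your argument is correct and follows the standard noise-outsourcing route to that result: truncate the Gaussian noise, invert the layer through $\boldsymbol g$, approximate $\boldsymbol G\circ\boldsymbol g$ uniformly on the compact range $K=\boldsymbol h_\ell(B_R\times\mathcal X)$ via \citet{Leshno1993UAT}, and pass to $d_{\mathrm{BL}}$ by coupling through the common noise, which is the same estimate the paper uses in Step~2 of the proof of Proposition~\ref{prop:certificate}.

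The argument relies on two readings of the statement, both of which match how the paper applies the theorem in Proposition~\ref{prop:certificate}. First, hypotheses (1)--(3) must hold for every radius $R$, since your $R$ depends on $\varepsilon$. Second, the generator is driven by the network's own Gaussian noise; there $\mathbf Z_J\sim\mathcal N(\mathbf 0,\mathbf I_{d_L})$ with $m=d_L$. Under that reading the CDF/quantile reduction you anticipate is not needed. Your worry about $\boldsymbol G\circ\boldsymbol g$ off the truncated image is also vacuous, because $K$ is exactly that image.
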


\DetCert*

\begin{proof}
Let $J_0 := \{j:b_j^{(1)} \in S\}$ and fix $J \subseteq J_0$ with $|J| = d_L$. Units in $T$ carry no stochastic parameter, so $J \cap T = \emptyset$, and $d_1 \geq d_0+d_L$ follows. Let $E:= \{1, \dots,d_1\}\setminus(T\cup J)$.

\textit{Step 1: reduced network} Consider configurations in which every weight from a unit in $E$ into layer 2 has mean $0$, and every stochastic parameter in layers $\geq 2$ has variance $v \geq 0$. Denote the output by $\boldsymbol F_v(\mathbf x)$. For $v = 0$ the output depends on the first layer through the units in $T\cup J$. For $j \in J$ decompose the pre-activation as $u_j = \boldsymbol {\bar{W}}_{j,\cdot}\boldsymbol {x} + \bar{b}_j + {\xi}_j$, where 
\[
    \xi_j = \sum_{i: (j,i)\in S}\sigma_{j,i} \varepsilon_{j,i} x_i + \sigma_j^b \varepsilon_j^b, \quad s_j(x)^2:= \sum_{i: (j,i)\in S} \sigma_{j,i}^2 x_i^2 + (\sigma_j^b)^2 \geq  ( \sigma_j^b)^2 >0
\]
with independent standard Gaussian $\boldsymbol {\varepsilon}$ that are independent of $\mathbf{X}$. Set $Z_j:= \frac{\xi_j}{s_j(\mathbf{X})}$ for $j \in J$. Conditionally on $\mathbf{X}=\mathbf{x}$ the vector $\mathbf{Z}_J$ is $\mathcal{N}(\mathbf{0}, \mathbf{I}_{d_L})$, and since this distribution does not depend on $\mathbf{x}$, $\mathbf{Z}_J$ is independent of $\mathbf{X}$. The first layer output on $T\cup J$ is $\mathbf{h} = \varphi(\mathbf{u})$ with $\mathbf{u}_T = \mathbf{\bar{W}}_{T, \cdot}\mathbf{x} + \mathbf{\bar{b}}_T$ and $u_j = \mathbf{\bar{W}}_{j, \cdot} \mathbf{x} + \bar{b}_j + s_j(\mathbf{x})Z_j$, a continuous function of $(\mathbf{Z}_J,\mathbf{x})$. We verify the conditions of \Cref{theoremA} for $\mathbf{F}_0$ with $\ell = 1$ and $m = d_L$. \textup{(1)} Holds by construction, \textup{(2)} $\mathbf{h}$ is a continuous image of the compact set $\{\| \mathbf{Z}_J\| \leq R \} \times \mathcal{X}$. \textup{(3)} A continuous strictly monotone $\varphi$ has a continuous inverse on the interval $\varphi(\mathbb{R})$, so $\mathbf u = \varphi^{-1}(\mathbf h)$ is recovered continuously. Since $|T| \geq d_0$ and the deterministic parameters are free, we may choose their values such that $\mathrm{rank}(\mathbf{\bar{W}}_{T,\cdot})=d_0$. Now we recover $\mathbf{x} = \mathbf{\bar{W}}_{T, \cdot}^{+}(\mathbf{u}_T - \mathbf{\bar{b}}_T)$, then $Z_j =\frac{(u_j - \mathbf{\bar{W}}_{j, \cdot}\mathbf{x}-\bar{b}_j)}{s_j(\mathbf{x})}$ for $j \in J$, where $s_j$ is continuous and bounded from below. This map $\mathbf{g}$ is continuous, recovers $(\mathbf{Z}_J,\mathbf{x})$ exactly and is surjective onto the restricted domain. \textup{(4)} Since $L \geq 3$, the network from layer 1 onward has at least one hidden layer with a continuous non-polynomial activation, and its weights out of $T\cup J$ are free. \citep{Leshno1993UAT} applies. Hence for every $\varepsilon>0$, there are parameter values with $\sup_\mathbf{x}d_{\mathrm{BL}}(p\mathbf{F}_0(\mathbf{x}), p(\mathbf{Y}|\mathbf{X}=\mathbf{x})) \leq \frac{\varepsilon}{2}$.

\textit{Step 2: Downstream noise} Write $\boldsymbol F_v(\boldsymbol x) = \boldsymbol \Psi(\boldsymbol x, \boldsymbol Z, \sqrt{v}\boldsymbol \varepsilon')$, where $\boldsymbol Z$ collects all first-layer noise, $\boldsymbol \varepsilon'$ the standard Gaussian noise of the stochastic parameters in layers $\geq 2$, and $\boldsymbol \Psi$ is jointly continuous. Then $\boldsymbol \Psi(\boldsymbol x,\boldsymbol Z,\boldsymbol 0) = \boldsymbol F_0(\boldsymbol x)$ and 
\[
    d_{\mathrm{BL}}(p\boldsymbol F_v(\boldsymbol x), p\boldsymbol F_0(\boldsymbol x)) \leq \mathbb{E}[\min\{2, \| \boldsymbol \Psi(\boldsymbol x, \boldsymbol Z, \sqrt{v}\boldsymbol \varepsilon') - \boldsymbol \Psi(\boldsymbol x,\boldsymbol Z,\boldsymbol 0)\|\}].
\]
Given $\delta>0$, choose $R'$ with $\mathbb{P}(\|(\mathbf{Z}, \boldsymbol{\varepsilon'})\|>R') \leq \delta$. On the compact set $\mathcal{X} \times \{ \|(\mathbf{z},\mathbf{e})\| \leq R'\}$, the map $\boldsymbol{\Psi}$ is uniformly continuous in $\mathbf{x}$. Choosing $3 \delta \leq \frac{\varepsilon}{2}$ and $v >0$ accordingly, gives a nondegenerate Gaussian distribution on $\Theta_s$ with $\sup_\mathbf{x} d_{\mathrm{BL}} (p\mathbf{F}_v(\mathbf{x}), p(\mathbf{Y}|\mathbf{X}=\mathbf{x})) \leq \varepsilon$. 

\textit{Complexity} One pass over the bias mask gives $U(S)$, one pass over the weight mask gives $T$. Together this adds up to $|\Theta^{(1)}|$
\end{proof}

\begin{restatable}[Make $\boldsymbol {\bar{W}}_{T, \cdot}$ full rank]{proposition}{WeightRepair}
\label{Prop:WeightRepair}
For fixed values of the deterministic first-layer weights, the rank condition can be restored as follows. For $j \in \{1,\dots,d_1 \}$ let $w_j:=|S \cap \Theta^{(1)}_j|$, where $\Theta_j^{(1)}$ denotes the first-layer parameters of unit $j$, be the number of parameters of unit $j$ that survived the prior optimization. Making unit $j$ fully deterministic means setting $\sigma_\theta=0$ for all $\theta \in S \cap\Theta^{(1)}_j$ changing $w_j$ parameters.

Let $J\subseteq\{1, \dots, d_1 \}$ be any set of units excluded from the repair and put $A :=\{1, \dots, d_1\}\setminus J$. Suppose $\mathrm{rank}(\boldsymbol {\bar{W}}^{(1)}_{A,\cdot}) = d_0$. Then \textup{(i)} the family $\mathcal{I}:=\{B\subseteq A: \mathrm{rank}(\boldsymbol {\bar{W}}^{(1)}_{B, \cdot}) = |B|\}$ is the set of independent sets of a linear matroid on $A$, and every basis has exactly $d_0$ elements; \textup{(ii)} sorting $A$ by $w_j$ in increasing order and greedily adding a unit whenever it increases the rank returns a basis $B^\star$ of minimum total weight $\sum_{j \in B^\star}w_j$; \textup{(iii)} making every unit of $B^\star$ fully deterministic yields a mask whose set $T'$ of fully deterministic units contains $B^\star$, so that $\mathrm{rank}(\boldsymbol {\bar{W}}^{(1)}_{T', \cdot}) = d_0$; \textup{(iv)} no repair that leaves the units of $J$ untouched and restores the rank condition changes fewer parameters if no unit in $J$ is fully deterministic. The cost is $\mathcal{O}(d_1\log d_1 + d_1d_0^2)$
\end{restatable}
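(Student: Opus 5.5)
The argument is the standard one for a linear matroid with nonnegative weights, checked against the paper's cost accounting. For \textup{(i)} I take the rows $\bar{\mathbf{W}}^{(1)}_{j,\cdot}$, $j\in A$, as vectors in $\mathbb{R}^{d_0}$. Then $\mathcal I$ is exactly the family of index sets whose rows are linearly independent, which is the linear (vector) matroid represented by these rows. The axioms need no new work: the empty set is independent, subsets of independent sets are independent, and the exchange axiom follows from the Steinitz exchange lemma. Every basis has cardinality equal to the rank of the ground set, namely $\mathrm{rank}(\bar{\mathbf W}^{(1)}_{A,\cdot})=d_0$ by hypothesis. If rows are repeated or zero, I treat $A$ as an indexed family so that parallel elements and loops are handled correctly.

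For \textup{(ii)} I invoke Rado--Edmonds. On a matroid, the greedy algorithm that scans elements in increasing weight and adds an element whenever independence is kept returns a minimum-weight basis. Adding an element keeps independence exactly when it increases the rank. Since all bases have size $d_0$ and $w_j\ge 0$, minimum weight among bases is well defined. I would include the short exchange argument rather than only cite it. Take an optimal basis $B'$ and the first index where it disagrees with $B^\star$. Exchange an element so that the weight does not increase. Induction on the number of agreeing positions then gives $w(B^\star)\le w(B')$.

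For \textup{(iii)}, setting $\sigma_\theta=0$ for every $\theta\in S\cap\Theta^{(1)}_j$ with $j\in B^\star$ leaves no stochastic first-layer parameter in those units, so $B^\star\subseteq T'$. The deterministic weight values themselves are unchanged, because only masks change. Hence $\mathrm{rank}(\bar{\mathbf W}^{(1)}_{T',\cdot})\ge \mathrm{rank}(\bar{\mathbf W}^{(1)}_{B^\star,\cdot})=d_0$, and equality holds because there are only $d_0$ columns.

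For \textup{(iv)}, which I expect to be the main obstacle, I would argue as follows. Take any repair that leaves $J$ untouched and produces a fully deterministic set $T''$ with rank $d_0$. Since no unit of $J$ is fully deterministic, $T''\subseteq A$. Choose a basis $B\subseteq T''$. Every unit $j\in B$ is fully deterministic afterwards. Before the repair, unit $j$ had $w_j$ surviving parameters, and the repair only switches stochastic parameters to deterministic, so it changed at least $w_j$ parameters in unit $j$. The units are disjoint parameter groups, so the total cost is at least $w(B)\ge w(B^\star)$. The delicate point is whether a repair could also make parameters stochastic and thereby change fewer of them. Such changes can only shrink $T''$, so the lower bound survives. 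I need to state explicitly that the count in each unit is at least $w_j$.

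For the complexity, sorting costs $\mathcal O(d_1\log d_1)$. An incremental rank test against an orthogonalized basis of size at most $d_0$ costs $\mathcal O(d_0^2)$ per unit, for example by Gram--Schmidt projection or an incremental QR. Over $d_1$ units this gives $\mathcal O(d_1 d_0^2)$.
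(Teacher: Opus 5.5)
Your proposal is correct and follows essentially the same route as the paper: the linear matroid on the rows of $\bar{\mathbf W}^{(1)}_{A,\cdot}$ for (i), the matroid greedy theorem for (ii), monotonicity of rank under adding rows for (iii), and for (iv) the observation that the final fully deterministic set lies in $A$ and contains a basis whose units each cost at least $w_j$. Your extra details fill in points the paper only cites or leaves implicit: the Steinitz exchange, the explicit exchange argument for greedy, the treatment of parallel elements and loops, and the remark that switching parameters to stochastic can only shrink the fully deterministic set.
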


\begin{proof}
    \textup{(i)} Linear independence of rows of a fixed matrix defines a linear (matric) matroid on the row index set with $\mathrm{rank}(\boldsymbol {\bar{W}}_{A, \cdot}^{(1)}) = d_0$, so all bases have $d_0$ elements. \textup{(ii)} The weight $B \mapsto \sum_{j \in B}w_j$ is nonnegative and modular, and for such weights the greedy algorithm returns a minimum-weight basis of a matroid \citep{Welsh1970Matroid}. \textup{(iii)} Every parameter of a unit in $B^\star$ has $\sigma_\theta = 0$ after the repair, so $B^\star \subseteq T'$, and the rank of a submatrix does not decrease when rows are added. \textup{(iv)} Any repaired mask satisfying the certificate has some $T'$ with $\mathrm{rank}(\boldsymbol{\bar{W}}^{(1)}_{T', \cdot}) = d_0$, hence $T'\cap A$ contains a basis $B'$ of the matroid, and every surviving parameter of every unit of $B'$ must have been set to zero, which costs $\sum_{j \in B'}w_j \geq \sum_{j \in B^\star}w_j$. The complexity of it is one sort plus $d_1$ rank-one updates of QR factorization of size $d_0$.
\end{proof}

\begin{restatable}[Minimal Repair]{corollary}{MinRep}
\label{cor:MinRep}
Suppose $d_1 \geq d_0 + d_L$. Let $J_0 := \{j : b_j^{(1)} \in S\}$, so that $U(S) = |J_0|$, and for every first-layer unit $j$ let $w_j := |S \cap \Theta^{(1)}_j|$ be the number of its stochastic parameters.

\textup{Step 1 (coverage).} If $U(S) < d_L$, choose $J \supseteq J_0$ with $|J| = d_L$ and make the biases of $J \setminus J_0$ stochastic. Otherwise choose $J \subseteq J_0$ with $|J| = d_L$. This changes $\max\{0, d_L - U(S)\}$ parameters and gives $U(S) \geq d_L$, and no repair achieving $U(S) \geq d_L$ changes fewer.

\textup{Step 2 (deterministic units).} Let $T$ be the set of fully deterministic units after Step 1 and $A := \{1, \dots, d_1\} \setminus J$. If $|T| < d_0$, let $B^\star$ consist of the $d_0 - |T|$ units of $A \setminus T$ with the smallest $w_j$, and make all first-layer parameters of these units deterministic. This changes $\sum_{j \in B^\star} w_j$ parameters, and no repair that leaves $J$ untouched and achieves $|T'| \geq d_0$ changes fewer, where $T' := T \cup B^\star$.

The resulting mask satisfies the certificate of Proposition~\ref{prop:certificate}. Step 2 does not undo Step 1 because $B^\star \cap J = \emptyset$, and Step 1 does not undo Step 2 because the units of $J$ have stochastic biases and are therefore not in $T'$.
\end{restatable}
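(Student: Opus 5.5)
The proof has three parts: Step 1 (coverage), Step 2 (deterministic units), and a final compatibility check showing that the resulting mask satisfies both conditions of Proposition~\ref{prop:certificate}. Each step is shown to be feasible and optimal by a direct counting argument.

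\emph{Step 1.} The coverage $U(S)$ counts stochastic first-layer biases. Making a bias stochastic raises $U(S)$ by at most one, and making anything deterministic can only lower it. Hence any repair that reaches $U(S)\ge d_L$ changes at least $d_L-U(S)$ parameters when $U(S)<d_L$, and zero otherwise. The construction attains this bound. Choosing $J\supseteq J_0$ with $|J|=d_L$ is possible because $d_1\ge d_0+d_L\ge d_L$. After the change, the biases of all units in $J$ lie in $S$.

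\emph{Step 2.} Fix $J$ and let $T$ be the set of fully deterministic units after Step 1. Every unit in $J$ has a stochastic bias, so $J\cap T=\emptyset$ and $T\subseteq A$. Consequently $|A\setminus T|=d_1-d_L-|T|\ge d_0-|T|$, so there are enough candidate units to pick $B^\star$.

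For optimality, consider any repair that leaves $J$ untouched and ends with $|T'|\ge d_0$. Only units of $A$ can join $T'$. The final set $T'$ must contain at least $d_0-|T|$ units from $A\setminus T$, or else it would have to remove units from $T$, which only hurts. Turning a unit $j\in A\setminus T$ fully deterministic costs at least $w_j$ changes. Units are disjoint, so these costs add up. Among all sets of $d_0-|T|$ units, the sum of the $w_j$ is smallest when we take the $d_0-|T|$ smallest values. This is the trivial uniform-matroid case of the greedy argument in Proposition~\ref{Prop:WeightRepair}(ii).

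\emph{Compatibility.} Step 2 only sets scales to zero in units of $B^\star\subseteq A$, and $A$ is disjoint from $J$. So the stochastic biases of $J$ survive, and $U(S)\ge|J|=d_L$ still holds. In the other direction, the units of $J$ have stochastic biases, so they are not in $T'$. Step 1 added stochasticity only to those units, so it cannot remove anything from $T'=T\cup B^\star$, and $|T'|\ge d_0$ holds. Both conditions of Proposition~\ref{prop:certificate} are therefore satisfied.

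\emph{Main obstacle.} The delicate point is the scope of the minimality claim in Step 2. One has to restrict to repairs that leave $J$ untouched, because a different choice of $J$ could change which units are available. Within that restriction one must also argue that undoing Step 1's choices never helps. I would state this restriction explicitly and rely on the disjointness of $J$ and $A$ to justify it.
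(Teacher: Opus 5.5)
Your proposal is correct and follows the paper's proof essentially step for step: the same counting lower bound for Step 1, the same feasibility argument via $T\subseteq A$ and $|A\setminus T|\ge d_0-|T|$, and the same optimality argument that any admissible repair must make at least $d_0-|T|$ units of $A\setminus T$ fully deterministic, at cost $w_j$ each. Your aside that falling short would mean ``removing units from $T$'' is unnecessary, because for the final set $T''\subseteq A$ of any repair that leaves $J$ untouched, $|T''\setminus T|\ge |T''|-|T|\ge d_0-|T|$ holds directly.
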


\begin{proof}
Step 1 makes one bias stochastic per changed parameter, which gives $U(S) \geq d_L$ afterwards, and any repair achieving this must change at least $d_L - U(S)$ biases. For Step 2, the units of $J$ have stochastic biases, so $T \subseteq A$ and $|A \setminus T| = |A| - |T| \geq d_0 - |T|$ by $d_1 \geq d_0 + d_L$. Hence $B^\star$ exists. Any repair that leaves $J$ untouched and achieves $|T'| \geq d_0$ must make at least $d_0 - |T|$ units of $A \setminus T$ fully deterministic, and doing so for a unit $j$ changes exactly $w_j$ parameters. The sum of $w_j$ over $d_0 - |T|$ units is smallest for the units with the smallest $w_j$, which are those in $B^\star$. After both steps $U(S) \geq d_L$ and $|T'| \geq d_0$, so Proposition~\ref{prop:certificate} applies.
\end{proof}

\subsection{Inference}
\label{app:theoReInf}
\begin{assumption}[Regularity]
\label{ass:reg}
\textup{(i)} For every $\boldsymbol \theta_s$, the map $\boldsymbol \theta_d \mapsto \ell (\boldsymbol \theta_d, \boldsymbol \theta_s)$ is continuously differentiable.
\textup{(ii)} There are $r \geq 0$ and $G\in L^1(p(\boldsymbol \theta_s))$,  with
$\|\nabla_{\boldsymbol \theta_d}\ell(\boldsymbol \theta_d,\boldsymbol \theta_s)\|\leq G(\boldsymbol \theta_s) (1+ \| \boldsymbol  \theta_d\|)^r$ for all $\boldsymbol \theta_d, \boldsymbol \theta_s$.
\textup{(iii)} $B := \sup _{\boldsymbol \theta_d,\boldsymbol \theta_s} e^{\ell(\boldsymbol \theta_d,\boldsymbol \theta_s)} < \infty$ and $Z(\boldsymbol \theta_d) >0$ for all $\boldsymbol \theta_d$.
\end{assumption}
\begin{remark}
\label{remark:regAss}
    \textup{(iii)} holds for categorical likelihoods and for Gaussian likelihoods with noise variance bounded from below. \textup{(ii)} holds whenever the gradient grows polynomially in $\boldsymbol \theta_s$, since $p(\boldsymbol \theta_s)$ is Gaussian. Under \textup{(ii)} and \textup{(iii)}, dominated convergence shows that $Z$ is $C^1$ and that $\boldsymbol \theta_d \mapsto \int \varphi(\boldsymbol \theta_d,\boldsymbol \theta_s) e^{\ell(\boldsymbol \theta_d,\boldsymbol \theta_s)}p(\boldsymbol \theta_s)d\boldsymbol \theta_s$ is continuous for every bounded continuous $\varphi$.
\end{remark}
Let $\Pi_\tau$ denote the joint posterior of $(\boldsymbol \theta_d, \boldsymbol \theta_s)$ under the prior $\mathcal N(\boldsymbol {\hat\mu}_d, \tau^2 \boldsymbol I_k) \otimes p(\boldsymbol \theta_s)$, so that $\Pi_\tau(\mathrm d\boldsymbol \theta_d, \mathrm d\boldsymbol \theta_s) \propto e^{\ell(\boldsymbol \theta_d, \boldsymbol \theta_s)}\, \mathcal N(\boldsymbol \theta_d; \boldsymbol {\hat\mu}_d, \tau^2\boldsymbol  I_k)\, p(\boldsymbol \theta_s)\, \mathrm d\boldsymbol \theta_d\, \mathrm d\boldsymbol \theta_s$.
\begin{restatable}[$\tau$ interpolates between freezing and type-II ML]{proposition}{TauProp}
\label{prop:gamma}
Under Assumption~\ref{ass:reg} the joint posterior $ \Pi_\tau$ has a Lebesgue density for every $\tau >0$. Moreover, \textup{(i)} as $\tau \downarrow 0, \Pi_\tau$ converges weakly to $\delta_{\boldsymbol {\hat{\mu}}_d} \otimes p(\boldsymbol \theta_s|\mathcal{D}, \boldsymbol {\hat{\mu}}_d)$; \textup{(ii)} for $0  < \tau < \infty, J_\tau$ is, up to a constant, the log marginal posterior density of $\boldsymbol \theta_d$; \textup{(iii)} as $\tau \uparrow\infty, J_\tau \rightarrow\log Z$ uniformly on compacts. Keeping the degenerate prior $\delta_{\boldsymbol {\hat{\mu}}_d}$ while optimizing $\boldsymbol \theta_d$ is ill-posed, since the log-prior is $-\infty$ off the atom.
\end{restatable}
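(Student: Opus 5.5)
We first write the unnormalized joint density and then treat the three limits separately. Existence of a Lebesgue density comes first. By Assumption~\ref{ass:reg}~(iii) the integrand $e^{\ell}\,\mathcal N(\boldsymbol\theta_d;\hat{\boldsymbol\mu}_d,\tau^2\mathbf I_k)\,p(\boldsymbol\theta_s)$ is bounded by $B$ times a probability density. Hence it is integrable, with normalizer $C_\tau = \int Z(\boldsymbol\theta_d)\,\mathcal N(\boldsymbol\theta_d;\hat{\boldsymbol\mu}_d,\tau^2\mathbf I_k)\,\mathrm d\boldsymbol\theta_d \le B$. We also have $C_\tau>0$, because $Z>0$ everywhere. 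So $\Pi_\tau$ is well defined and absolutely continuous with respect to Lebesgue measure on $\mathbb R^{k+m}$.

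For (ii) we integrate out $\boldsymbol\theta_s$ by Fubini. This gives the marginal density $\boldsymbol\theta_d \mapsto C_\tau^{-1} Z(\boldsymbol\theta_d)\,\mathcal N(\boldsymbol\theta_d;\hat{\boldsymbol\mu}_d,\tau^2\mathbf I_k)$. Its logarithm is $\log Z(\boldsymbol\theta_d) - \frac{1}{2\tau^2}\|\boldsymbol\theta_d-\hat{\boldsymbol\mu}_d\|^2 - \log C_\tau - \frac k2\log(2\pi\tau^2)$, which equals $J_\tau$ plus a constant in $\boldsymbol\theta_d$.

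For (iii) we note that $J_\tau - \log Z = -\frac{1}{2\tau^2}\|\boldsymbol\theta_d-\hat{\boldsymbol\mu}_d\|^2$. On a compact set $K$ this difference is bounded by $\sup_K\|\boldsymbol\theta_d-\hat{\boldsymbol\mu}_d\|^2/(2\tau^2)\to0$, so the convergence is uniform on compacts. The final remark about the degenerate prior is immediate: the log density of $\delta_{\hat{\boldsymbol\mu}_d}$ relative to Lebesgue measure is $-\infty$ on $\mathbb R^k\setminus\{\hat{\boldsymbol\mu}_d\}$. Any optimizer of the resulting objective is therefore pinned at the atom, and the gradient-based update is undefined there.

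The main work is (i). We take a bounded continuous test function $\phi(\boldsymbol\theta_d,\boldsymbol\theta_s)$ and set $\Phi(\boldsymbol\theta_d) := \int \phi\, e^{\ell}\, p(\boldsymbol\theta_s)\,\mathrm d\boldsymbol\theta_s$. Then
\[
\int \phi\,\mathrm d\Pi_\tau = \frac{\int \Phi(\boldsymbol\theta_d)\,\mathcal N(\boldsymbol\theta_d;\hat{\boldsymbol\mu}_d,\tau^2\mathbf I_k)\,\mathrm d\boldsymbol\theta_d}{\int Z(\boldsymbol\theta_d)\,\mathcal N(\boldsymbol\theta_d;\hat{\boldsymbol\mu}_d,\tau^2\mathbf I_k)\,\mathrm d\boldsymbol\theta_d}.
\]
By Remark~\ref{remark:regAss}, both $\Phi$ and $Z$ are continuous, and both are bounded by $\|\phi\|_\infty B$ and $B$ respectively. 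Since $\mathcal N(\hat{\boldsymbol\mu}_d,\tau^2\mathbf I_k)\Rightarrow\delta_{\hat{\boldsymbol\mu}_d}$ as $\tau\downarrow0$, the numerator tends to $\Phi(\hat{\boldsymbol\mu}_d)$ and the denominator to $Z(\hat{\boldsymbol\mu}_d)>0$. The ratio is therefore $\Phi(\hat{\boldsymbol\mu}_d)/Z(\hat{\boldsymbol\mu}_d) = \int\phi(\hat{\boldsymbol\mu}_d,\boldsymbol\theta_s)\,p(\boldsymbol\theta_s\mid\mathcal D,\hat{\boldsymbol\mu}_d)\,\mathrm d\boldsymbol\theta_s$, which is weak convergence to $\delta_{\hat{\boldsymbol\mu}_d}\otimes p(\cdot\mid\mathcal D,\hat{\boldsymbol\mu}_d)$.

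The step most likely to need care is the continuity of $\Phi$ when $\phi$ depends on $\boldsymbol\theta_d$ jointly. We get it by dominated convergence: the integrand is bounded by $\|\phi\|_\infty B$, which is integrable against $p(\boldsymbol\theta_s)$. Pointwise continuity in $\boldsymbol\theta_d$ holds because $\ell$ is continuous in $\boldsymbol\theta_d$ by Assumption~\ref{ass:reg}~(i) and $\phi$ is continuous. This is exactly the continuity statement recorded in Remark~\ref{remark:regAss}.
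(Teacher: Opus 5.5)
Your proposal is correct and follows the paper's proof essentially step for step. For (i) you write $\mathbb{E}_{\Pi_\tau}\phi$ as the ratio of $\Phi$ and $Z$ integrated against the shrinking Gaussian, for (ii) you write out $\log Z + \log\mathcal N$, and for (iii) you use the uniform bound on the quadratic over compacts; your explicit dominated-convergence argument for the continuity of $\Phi$, which also invokes Assumption~\ref{ass:reg}~(i), is a slightly more careful version of what the paper delegates to Remark~\ref{remark:regAss}.
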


\begin{proof}
The joint density $\frac{e^{\ell(\boldsymbol \theta_s, \boldsymbol \theta_d)}p(\boldsymbol \theta_s)\mathcal{N}(\boldsymbol \theta_d; \boldsymbol {\hat{\mu}}_d, \tau^2\boldsymbol I)}{p(\mathcal{D})}$ is integrable by regularity assumption \textup{(iii)}. For \textup{(i)} let $\varphi$ be bounded and continuous and set $\Phi(\boldsymbol \theta_d) := \int \varphi(\boldsymbol \theta_d,\boldsymbol \theta_s)e^{\ell(\boldsymbol \theta_d,\boldsymbol \theta_s)}p(\boldsymbol \theta_s)d\boldsymbol \theta_s$. $\Phi$ and $Z$ are bounded by $\|\varphi\|_\infty B$ and $B$ and continuous by the Remark~\ref{remark:regAss}. Since $\mathcal{N}(\boldsymbol {\hat{\mu}}_d, \tau^2\boldsymbol I) \rightarrow \delta_{\boldsymbol {\hat{\mu}}_d}$,
\[
    \mathbb{E}_{\Pi_\tau \varphi} = \frac{\int \Phi d \mathcal{N}(\boldsymbol {\hat{\mu}}_d, \tau^2\boldsymbol I)}{\int Z d \mathcal{N}(\boldsymbol {\hat{\mu}}_d, \tau^2\boldsymbol I)} \rightarrow \frac{\Phi(\boldsymbol {\hat{\mu}}_d)}{Z(\boldsymbol {\hat{\mu}}_d)} = \mathbb{E}_{p(\boldsymbol \theta_s|\mathcal{D}, \boldsymbol {\hat{\mu}}_d)}[\varphi(\boldsymbol {\hat{\mu}}_d,\boldsymbol \theta_s)],
\]
using $Z(\boldsymbol {\hat{\mu}}_d) >0$. \textup{(ii)} follows by writing out $\log Z(\boldsymbol \theta_d) + \log \mathcal{N}(\boldsymbol \theta_d; \boldsymbol {\hat{\mu}}_d, \tau^2\boldsymbol I)$. For \textup{(iii)}, $\sup_{\|\boldsymbol \theta_d\|<\rho}\frac{\|\boldsymbol \theta_d - \boldsymbol {\hat{\mu}}_d \|^2}{2\tau^2} \rightarrow 0$.
\end{proof}

\InfScheme*

\begin{proof}
By definition of the \kl divergence,
\[
\mathrm{KL} (q_\varepsilon \otimes q_s\| \Pi_\tau) = - H(q_\varepsilon)- H(q_s) - \mathbb{E}_{ q_s \otimes q_\varepsilon}[\ell (\boldsymbol \theta'_d, \boldsymbol \theta_s) + \log p(\boldsymbol \theta'_d) + \log p(\boldsymbol \theta_s)] + \log  p(\mathcal{D}),
\]
and $H(q_\varepsilon) = \frac{k}{2}\log (2\pi e \varepsilon^2)$. Exactly $\mathbb{E}_{ q_\varepsilon}[\log p(\boldsymbol \theta'_d)] = \log p(\boldsymbol \theta_d) - \frac{k\varepsilon^2}{2\tau^2}$. By the mean value theorem and \textup{(ii)} of Assumption~\ref{ass:reg}, $|\ell (\boldsymbol \theta'_d,\boldsymbol \theta_s) - \ell (\boldsymbol \theta_d,\boldsymbol \theta_s)| \leq G(\boldsymbol \theta s) (1+ \| \boldsymbol \theta_d \| + \| \boldsymbol \theta'_d - \boldsymbol \theta_d \|)^r \| \boldsymbol \theta'_d - \boldsymbol \theta_d\|$, so
\[
|\mathbb{E}_{q_\varepsilon \otimes q_s}[\ell (\boldsymbol \theta'_d,\boldsymbol \theta_s)] - \mathbb{E}_{q_s}[\ell (\boldsymbol \theta_d,\boldsymbol \theta_s)]| \leq \mathbb{E}_{q_s}[G]\mathbb{E}_{\boldsymbol W \sim \mathcal{N}(\boldsymbol 0, \boldsymbol I_k)}[(1+\|\boldsymbol \theta_d \| + \| \boldsymbol W \|)^r \| \boldsymbol W \|]\varepsilon,
\]
which defines $C(\boldsymbol \theta_d)$. Using $p(\boldsymbol \theta_s|\mathcal{D}, \boldsymbol \theta_d) = \frac{e^{\ell(\boldsymbol \theta_d,\boldsymbol \theta_s)}p(\boldsymbol \theta_s)}{Z(\boldsymbol \theta_d)}$,
\[
\mathrm{KL}(q_s \| p(\cdot,| \mathcal{D}, \boldsymbol \theta_d)) = - H (q_s) - \mathbb{E}_{q_s}[\ell(\boldsymbol \theta_d,\boldsymbol \theta_s)] - \mathbb{E}_{q_s}[\log p(\boldsymbol \theta_s)] + \log Z(\boldsymbol \theta_d),
\]
and substituting gives the stated form of $F$. The \kl term is nonnegative and vanishes if and only if $q_s = p (\cdot |\mathcal{D}, \boldsymbol \theta_d)$, which lies in $\mathcal{P}_G$ by \textup{(iii)}. It is the only term that depends on $q_s$, so minimizing $F$ reduces to maximizing $J_\tau$.
\end{proof}

\begin{restatable}[Tracking bound under constant step sizes]{theorem}{TrackBound}
\label{thm:track}
Assume for all $\boldsymbol \theta_d, \boldsymbol \theta_d' \in K$ \textup{(C1)} $W_1 ( \mu P^h_{\boldsymbol \theta_d}, \nu P_{\boldsymbol \theta_d}^h) \leq \rho W_1(\mu, \nu)$ with $\rho <1$, \textup{(C2)} $W_1(\pi_{\boldsymbol \theta_d}^h, \pi_{\boldsymbol \theta_d'}^h) \leq L_\pi \| \boldsymbol \theta_d - \boldsymbol \theta'_d \|$, \textup{(C3)} $\|\boldsymbol \theta_{d,t+1}-\boldsymbol \theta_{d,t}\| \leq \delta_t$, \textup{(C4)} $W_1(\pi_{\boldsymbol \theta_d}^h, \pi_{\boldsymbol \theta_d}) \leq C_1h$ for the $s$-marginals. Let $\boldsymbol X_t = (\boldsymbol \theta_{s,t}, v_t)$ denote the state of the sampler after $t$ steps, with momentum $v_t$. Let $\mu_t$ be the distribution of $\boldsymbol X_t$ and $e_t:= W_1(\mu_t, \pi_{\boldsymbol \theta_{d,t}}^h)$. Then 
\begin{align*}
e_{t+1} \leq \rho e_t + L_\pi \delta_t&, \quad e_t \leq \rho^te_0 + L_\pi \sum_{j<t}\rho^{t-1-j}\delta_j, \\ W_1(\boldsymbol \mu_t^{\boldsymbol \theta_s}, \pi_{\boldsymbol \theta_{d_t}}) &\leq e_t + C_1h
\end{align*}
\end{restatable}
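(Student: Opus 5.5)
The recursion follows from a single application of the triangle inequality for $W_1$, with the sampler's $t$-th step viewed as the kernel $P^h_{\boldsymbol\theta_{d,t}}$ acting on $\mu_t$. We take the optimizer path $(\boldsymbol\theta_{d,t})_t$ as a deterministic sequence in $K$; this is the open-loop setting mentioned in the limitations. Then $\mu_{t+1} = \mu_t P^h_{\boldsymbol\theta_{d,t}}$, and $\pi^h_{\boldsymbol\theta_{d,t}}$ is invariant for $P^h_{\boldsymbol\theta_{d,t}}$, so $\pi^h_{\boldsymbol\theta_{d,t}} = \pi^h_{\boldsymbol\theta_{d,t}}P^h_{\boldsymbol\theta_{d,t}}$. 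We write
\[
e_{t+1} = W_1(\mu_{t+1}, \pi^h_{\boldsymbol\theta_{d,t+1}}) \leq W_1(\mu_t P^h_{\boldsymbol\theta_{d,t}}, \pi^h_{\boldsymbol\theta_{d,t}}P^h_{\boldsymbol\theta_{d,t}}) + W_1(\pi^h_{\boldsymbol\theta_{d,t}}, \pi^h_{\boldsymbol\theta_{d,t+1}}).
\]
(C1) bounds the first term by $\rho e_t$. (C2) together with (C3) bounds the second by $L_\pi\|\boldsymbol\theta_{d,t+1}-\boldsymbol\theta_{d,t}\| \leq L_\pi\delta_t$. Together these give $e_{t+1}\leq \rho e_t + L_\pi\delta_t$.

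The closed form follows by induction on $t$. The case $t=0$ is trivial. Assuming the bound for $e_t$, the recursion gives $e_{t+1} \leq \rho^{t+1}e_0 + L_\pi\sum_{j<t}\rho^{t-j}\delta_j + L_\pi\delta_t$, which is the claim at $t+1$. This step uses $\rho\geq 0$ so that multiplying by $\rho$ preserves the inequality.

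For the last inequality we pass to $\boldsymbol\theta_s$-marginals. The projection $(\boldsymbol\theta_s, v)\mapsto\boldsymbol\theta_s$ is $1$-Lipschitz for the product (Euclidean) metric, and pushforward by a $1$-Lipschitz map does not increase $W_1$. Hence $W_1(\boldsymbol\mu^{\boldsymbol\theta_s}_t, (\pi^h_{\boldsymbol\theta_{d,t}})^{\boldsymbol\theta_s}) \leq e_t$. The triangle inequality with (C4), which is stated for $s$-marginals, then adds $C_1h$.

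I expect the main obstacle to be interpretive rather than technical. We must fix that $\pi^h_{\boldsymbol\theta_d}$ is the invariant law of the numerical sampler on the extended state $(\boldsymbol\theta_s,v)$, and that (C1) and (C2) refer to this extended state while (C4) refers to marginals. We must also ensure every $\boldsymbol\theta_{d,t}$ stays in $K$, so that all conditions apply along the path. I would state these conventions explicitly at the start of the proof. The remaining question is whether the invariant measure $\pi^h_{\boldsymbol\theta_{d,t}}$ exists; (C1) with $\rho<1$ gives it by a Banach fixed-point argument on the space of probability measures with finite first moment.
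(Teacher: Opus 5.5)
Your proposal is correct and matches the paper's proof. The paper uses the same open-loop assumption, the same one-step triangle inequality through $\pi^h_{\boldsymbol\theta_{d,t}}P^h_{\boldsymbol\theta_{d,t}}$ with (C1)--(C3), iteration for the closed form, and marginalization plus (C4) for the last bound. Your explicit conventions (the path staying in $K$, the projection being $1$-Lipschitz, and existence of the invariant law) only make precise what the paper leaves implicit.
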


\begin{proof}
Since $\pi_{\boldsymbol \theta_{d,t}}^h$ is invariant for $P_{\boldsymbol \theta_{d,t}}^h$ and $\mu_{t+1} = \mu_t P_{\boldsymbol \theta_{d,t}}^h$ for an open-loop path,
\[
e_{t+1} \leq W_1(\mu_t P_{\boldsymbol \theta_{d,t}}^h, \pi^h_{\boldsymbol \theta_{d,t}}P_{\boldsymbol \theta_{d,t}}^h)+ W_1(\pi_{\boldsymbol \theta_{d,t}}^h, \pi_{\boldsymbol \theta_{t+1}}^h) \leq \rho e_t + L_\pi \delta_t
\]
Iterating gives the second bound. The third follows from the triangle inequality, marginalization, and \textup{(C4)}.
\end{proof}

\CoupledStepSize*

\begin{proof}
Insert $\delta_j \leq G\eta$ into \Cref{thm:track} and use $\sum_j\rho^j\leq \frac{1}{1-\rho}$. For (b) add $C_1h$ from \textup{(C4)}.
\end{proof}

This result gives an upper bound for the error, the following proposition shows that this bound is also attained:
\begin{restatable}[The coupled bound is attained]{proposition}{BoundAttained}
\label{prop:BoundAttained}
Let $\pi_{\theta_d} = \mathcal{N}(\theta_d,1)$ on $\mathbb{R}$, let $P_{\theta_d}^h$ be the unadjusted Langevin kernel $x \mapsto x - h(x-\theta_d) + \sqrt{2h}W$ with $h \in (0,1)$, let $X_0 \sim \pi_{\theta_{d,0}}^h$ and let $\theta_{d,t} = \theta_{d,0} + G\eta t$. Then \textup{(C1)} to \textup{(C3)} hold with $\rho = 1-h, c = 1, L_\pi = 1, \delta_t = G\eta$, and 
\[
W_1(\mu_t, \pi_{\theta_{d,t}}) \geq W_1(\mu_t, \pi_{\theta_{d,t}}^h) = \frac{G\eta}{h} ( 1- (1-h)^t).
\]
In the coupled case $h = \eta$ this is at least $G(1-e^{-1})$ for all $t \geq \frac{1}{\eta}$, independently of $\eta$.
\end{restatable}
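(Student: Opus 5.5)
The plan is to exploit that everything here is Gaussian with a common variance, so every Wasserstein distance involved reduces to a difference of means and can be computed in closed form. First, identify the stationary law of $P^h_{\theta_d}$. If $X\sim\mathcal N(m,v)$, then one step gives
\[
X - h(X-\theta_d) + \sqrt{2h}W \sim \mathcal N\bigl((1-h)m + h\theta_d,\ (1-h)^2 v + 2h\bigr).
\]
The fixed point is therefore $\pi^h_{\theta_d} = \mathcal N(\theta_d, v_h)$ with $v_h = \frac{2h}{1-(1-h)^2} = \frac{2}{2-h}$.

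Next I verify \textup{(C1)}--\textup{(C3)}.
\begin{itemize}
\item For \textup{(C1)}, use the synchronous coupling: run two chains with the same $W$. Their difference contracts deterministically by the factor $1-h$, so $W_1(\mu P^h_{\theta_d},\nu P^h_{\theta_d}) \le (1-h)\,W_1(\mu,\nu)$. This gives $\rho = 1-h = 1 - ch$ with $c=1$ (exactly, with no $o(h)$ term).
\item For \textup{(C2)}, note that $\pi^h_{\theta_d}$ and $\pi^h_{\theta_d'}$ are translates of each other. The translation coupling gives the upper bound $|\theta_d-\theta_d'|$. Testing against the $1$-Lipschitz function $x\mapsto x$ gives the matching lower bound. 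Hence $L_\pi = 1$.
\item \textup{(C3)} holds trivially with $\delta_t = G\eta$, since $\theta_{d,t+1}-\theta_{d,t} = G\eta$.
\end{itemize}

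Now track the sampler along the open-loop path. Since $X_0 \sim \pi^h_{\theta_{d,0}}$ has variance $v_h$, and $v_h$ is the fixed point of the variance recursion (which does not depend on $\theta_d$), every $\mu_t$ is $\mathcal N(m_t, v_h)$. The means satisfy $m_{t+1} = (1-h)m_t + h\theta_{d,t}$. Set $d_t := \theta_{d,t} - m_t$, so $d_0 = 0$. Then
\[
d_{t+1} = \theta_{d,t} + G\eta - (1-h)m_t - h\theta_{d,t} = (1-h)d_t + G\eta,
\]
which solves to
\[
d_t = \frac{G\eta}{h}\bigl(1-(1-h)^t\bigr) \ge 0.
\]
Because $\mu_t$ and $\pi^h_{\theta_{d,t}}$ are Gaussians with the same variance, the same translation argument as in \textup{(C2)} gives $W_1(\mu_t, \pi^h_{\theta_{d,t}}) = d_t$ exactly.

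For the inequality $W_1(\mu_t,\pi_{\theta_{d,t}}) \ge W_1(\mu_t,\pi^h_{\theta_{d,t}})$, apply Kantorovich--Rubinstein duality with the test function $x \mapsto x$. Since $\pi_{\theta_{d,t}} = \mathcal N(\theta_{d,t},1)$ also has mean $\theta_{d,t}$, this gives $W_1(\mu_t,\pi_{\theta_{d,t}}) \ge |m_t - \theta_{d,t}| = d_t$.

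Finally, in the coupled case $h=\eta$ we get $d_t = G\bigl(1-(1-\eta)^t\bigr)$. Using $(1-\eta)^t \le e^{-\eta t} \le e^{-1}$ for $t \ge 1/\eta$ yields the bound $G(1-e^{-1})$, independently of $\eta$.

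I do not expect a real obstacle; the only care needed is the variance bookkeeping. The initialization at $\pi^h_{\theta_{d,0}}$ is exactly what keeps the variance frozen at $v_h$, which makes the mean-difference formula for $W_1$ exact rather than merely an upper bound. Note also that the kernel here has no momentum, so $\mu_t$ is simply the law of $X_t$.
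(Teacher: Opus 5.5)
Your proposal is correct and follows the paper's proof essentially step for step. Both use the synchronous coupling for (C1), the stationary variance $v_h = 2/(2-h)$ together with translation for (C2), and the mean-lag recursion $d_{t+1} = (1-h)d_t + G\eta$. Both then use the facts that equal-variance Gaussians are at $W_1$-distance equal to their mean gap, the test function $x \mapsto x$ for the lower bound, and $(1-\eta)^t \le e^{-1}$ for $t \ge 1/\eta$.
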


\begin{proof}
Synchronous coupling gives \textup{(C1)} with $\rho = 1-h$. The variance recursion $v' = (1-h)^2v + 2h$ has fixed point $v_h = \frac{1}{1- \frac{h}{2}}$, so $\pi_{\theta_d}^h =\mathcal{N}(\theta_d,v_h)$ and \textup{(C2)} holds with $L_\pi = 1$. Since $X_0 \sim \pi_{\theta_{d,0}}^h$ and the variance recursion does not involve $\theta_d, \mathcal{N}(m_t, v_t)$ with $m_{t+1} = (1-h)m_t + h\theta_{d,t}$. The lag $\ell_t := \theta_{d,t} -m_t$ satisfies $\ell_0 = 0$ and $\ell_{t+1} = (1-h)\ell_t + G \eta$, hence $\ell_t = \frac{G\eta}{h}(1- (1-h)^t)$. Two Gaussians with equal variance and means differing by $\ell_t$ are at $W_1$-distance $\ell_t$, and $W_1(\mu_t, \pi_{\theta_{d,t}}) \geq |\mathbb{E}[X_t] - \theta_{d,t}|= \ell_t$. Finally, $(1-\eta)^{\frac{1}{\eta}} \leq e^{-1}$.
\end{proof}

We also manage to find an error decomposition, where each term relates to a specific modeling choice. This allows us to better understand how to make the modeling choices, if we want to reduce the error.
\begin{restatable}[Error decomposition]{corollary}{ErrorDecomp}
\label{cor:decomp}
In the setting of \Cref{thm:track}, let $\phi$ satisfy $\|\phi\|_\infty \leq 1$ and $\mathrm{Lip}(\phi) \leq 1$, let $T_0$ be the burn-in, $\hat{A}:= \frac{1}{T} \sum_{t = T_0+1}^{T_0+T} \phi(\mathbf X_t^s), \bar{\delta}:= \sup_t \delta_t$, and $M:= \sup_t \mathbb{E}[\|\mathbf X_t - \mathbf X_t' \|] < \infty$ with $\mathbf X_t'$ an independent copy of $\mathbf X_t$. Then
\begin{align*}
\left( \mathbb{E}[(\hat{A}- \mathbb{E}_{\pi_{\boldsymbol \theta_d^\star}}[\phi])^2]\right)^{\frac{1}{2}} \leq \underbrace{C_1h}_{\text{(a) Discretization}} + \underbrace{\frac{L_\pi \bar{\delta}}{1-\rho}}_{\text{(b) Tracking}} \\+ \underbrace{\frac{L_\pi}{T} \sum_{t = T_0+1}^{T_0+T}\| \boldsymbol \theta_{d_t} - \boldsymbol \theta^\star_d \|}_{\text{(c) Optimizer}} + \underbrace{\frac{\rho^{T_0}e_0}{T(1-\rho)}}_{\text{(d) Burn-in}} + \underbrace{\sqrt{\frac{2 \max\{1,M\}}{T(1-\rho)}}}_{\text{(e) Monte Carlo}}
\end{align*}
\end{restatable}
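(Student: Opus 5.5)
The plan is to split the error by the triangle inequality in $L^2$. Write $\phi$ for the test function and $\pi^\star := \pi_{\boldsymbol\theta_d^\star}$. We insert three intermediate quantities. The first is $\bar A := \frac1T\sum_t \mathbb E[\phi(\mathbf X_t^s)]$, the expectation of the estimator. The second is $\frac1T\sum_t \mathbb E_{\pi_{\boldsymbol\theta_{d,t}}}[\phi]$, the average of the targets along the optimizer path. The third is $\mathbb E_{\pi^\star}[\phi]$. This gives
\[
\bigl\|\hat A-\mathbb E_{\pi^\star}[\phi]\bigr\|_{L^2}\le \bigl\|\hat A-\bar A\bigr\|_{L^2}+\Bigl|\bar A-\tfrac1T\textstyle\sum_t \mathbb E_{\pi_{\boldsymbol\theta_{d,t}}}[\phi]\Bigr|+\tfrac1T\textstyle\sum_t\bigl|\mathbb E_{\pi_{\boldsymbol\theta_{d,t}}}[\phi]-\mathbb E_{\pi^\star}[\phi]\bigr| .
\]
Since $\phi$ is $1$-Lipschitz and bounded by one, every bias term reduces to a $W_1$ distance by Kantorovich--Rubinstein duality.

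\textbf{Bias part.} For each $t$, $|\mathbb E\phi(\mathbf X_t^s)-\mathbb E_{\pi_{\boldsymbol\theta_{d,t}}}\phi|\le W_1(\boldsymbol\mu_t^{\boldsymbol\theta_s},\pi_{\boldsymbol\theta_{d,t}})$. By \Cref{thm:track} this is at most $e_t+C_1h$, and $e_t\le\rho^te_0+L_\pi\bar\delta/(1-\rho)$. Averaging over $t\in\{T_0+1,\dots,T_0+T\}$ and summing the geometric tail gives $\frac1T\sum_t\rho^t e_0\le \rho^{T_0}e_0/(T(1-\rho))$. This produces terms (a), (b) and (d). The optimizer term (c) comes from $|\mathbb E_{\pi_{\boldsymbol\theta_{d,t}}}\phi-\mathbb E_{\pi^\star}\phi|\le W_1(\pi_{\boldsymbol\theta_{d,t}},\pi^\star)$. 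The natural plan is to bound this by $L_\pi\|\boldsymbol\theta_{d,t}-\boldsymbol\theta_d^\star\|$ via (C2). However, (C2) is stated for the discretized $\pi^h$, so I would either route through $\pi^h$ and reuse (C4), absorbing the extra $C_1h$ into (a), or read (C2) as also holding for the exact $s$-marginals. The constant in (a) is then possibly $2C_1h$, and I would check which reading the stated constant requires.

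\textbf{Monte Carlo part, the main obstacle.} We need $\mathrm{Var}(\hat A)\le 2\max\{1,M\}/(T(1-\rho))$. I would expand
\[
\mathrm{Var}(\hat A)=\frac1{T^2}\sum_{s,t}\mathrm{Cov}\bigl(\phi(\mathbf X_s),\phi(\mathbf X_t)\bigr)
\]
and bound the covariance for $s\le t$ by $C\rho^{t-s}$. To get that decay, condition on $\mathcal F_s$. Write $g(x):=\mathbb E[\phi(\mathbf X_t)\mid\mathbf X_s=x]$, which is the composition of the open-loop kernels $P^h_{\boldsymbol\theta_{d,s}}\cdots P^h_{\boldsymbol\theta_{d,t-1}}$ applied to $\phi$. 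By (C1) and duality, $g$ is $\rho^{t-s}$-Lipschitz. Hence
\[
|\mathrm{Cov}(\phi(\mathbf X_s),g(\mathbf X_s))|\le \|\phi\|_\infty\,\mathbb E|g(\mathbf X_s)-g(\mathbf X_s')|\le \rho^{t-s}M ,
\]
using the independent copy $\mathbf X_s'$. The diagonal terms are bounded by $1$. Summing gives $\mathrm{Var}(\hat A)\le\frac1{T^2}\bigl(T+2TM\sum_{k\ge1}\rho^k\bigr)\le 2\max\{1,M\}/(T(1-\rho))$, and taking square roots yields (e).

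The delicate points are twofold. First, the open-loop assumption, which the limitations section already adopts, is what makes $g$ a deterministic function. Second, (C1) holds only for $\boldsymbol\theta_d\in K$, so the path must be assumed to stay in $K$. I expect this covariance step, and getting the constant exactly $2\max\{1,M\}$, to be where the care is needed.
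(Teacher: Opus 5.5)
Your Monte Carlo step is the paper's argument essentially verbatim. The function $g=\phi P^h_{\boldsymbol\theta_{d,u}}\cdots P^h_{\boldsymbol\theta_{d,t-1}}$ is $\rho^{t-u}$-Lipschitz by (C1) applied to Dirac masses, and the covariance is bounded by $\rho^{t-u}M$ through the independent copy and by $1$ on the diagonal. The result is then combined with the bias by Minkowski. Your treatment of terms (b) and (d) via the recursion of \Cref{thm:track} also matches the paper.

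The gap is in the bias part. You use the exact targets $\pi_{\boldsymbol\theta_{d,t}}$ as the intermediate quantity, and with that choice the stated bound does not follow from (C1)--(C4). You pay $C_1h$ once when you pass from $\boldsymbol\mu_t^{\boldsymbol\theta_s}$ to $\pi_{\boldsymbol\theta_{d,t}}$ through the third inequality of \Cref{thm:track}. Because (C2) only controls the discretized invariants, bounding $W_1(\pi_{\boldsymbol\theta_{d,t}},\pi^\star)$ then forces the detour $\pi_{\boldsymbol\theta_{d,t}}\to\pi^h_{\boldsymbol\theta_{d,t}}\to\pi^h_{\boldsymbol\theta_d^\star}\to\pi^\star$, which costs two more applications of (C4). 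Your first option therefore gives $3C_1h$, not $2C_1h$. Your second option assumes Lipschitz dependence of the exact marginals on $\boldsymbol\theta_d$, which the statement does not grant. You left this open, so as written the proposal does not prove the claimed constant.

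The fix is to take the intermediate quantities from the discretized family. Write $\mu_t\phi:=\mathbb E[\phi(\mathbf X_t^s)]$ and $\pi^h_\star:=\pi^h_{\boldsymbol\theta_d^\star}$. Then
\[
\mathbb E[\hat A]-\pi^\star\phi=\frac1T\sum_t\bigl(\mu_t\phi-\pi^h_{\boldsymbol\theta_{d,t}}\phi\bigr)+\frac1T\sum_t\bigl(\pi^h_{\boldsymbol\theta_{d,t}}\phi-\pi^h_\star\phi\bigr)+\bigl(\pi^h_\star\phi-\pi^\star\phi\bigr).
\]
Bound the first sum by $\frac1T\sum_t e_t$ directly, since $\phi$ composed with the projection onto the $s$-component is $1$-Lipschitz. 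This avoids the third inequality of \Cref{thm:track} and yields (b) and (d). The second sum is bounded by (C2) and gives (c) exactly. The last term is bounded by (C4) applied once at $\boldsymbol\theta_d^\star$, which gives (a) with constant $C_1h$. The exact targets $\pi_{\boldsymbol\theta_{d,t}}$ never need to appear, and this is how the paper obtains the stated constant.
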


\begin{proof}
\textit{Bias.} Decompose 
\[
    \mathbb{E}[\hat{A}]- \pi_{\boldsymbol \theta^\star}\phi = \frac{1}{T} \sum_t (\mu_t\phi - \pi_{\boldsymbol \theta_{d,t}}^h\phi) + \frac{1}{T} \sum_t (\pi_{\boldsymbol \theta_{d,t}}^h \phi - \pi_{\boldsymbol \theta^\star}^h\phi) + (\pi_{\boldsymbol \theta^\star}^h\phi - \pi_{\boldsymbol \theta^\star}\phi)
\]
Since $\phi$ is $1$-Lipschitz, the first sum is at most $\frac{1}{T} \sum_t e_t \leq \frac{\rho^{T_0}e_0}{T(1-\rho)}+ \frac{L_\pi \bar{\delta}}{1-\rho}$ by \Cref{thm:track}, the second is at most term (c) by \textup{(C2)}, and the third is at most $C_1h$ by \textup{(C4)}.

\textit{Variance.} For $u <t$ let $g := \phi\Pi _{i = u}^{t-1}P_{\theta_i}^h$. By \textup{(C1)} applied to Dirac measures, $\mathrm{Lip} (g) \leq \rho^{t-u}$. Hence, 
\[
|\mathrm{Cov}(\phi(X_u), \phi(X_t))| = |\mathrm{Cov}(\phi(X_u), g(X_u))| \leq \mathbb{E}[|g(X_u) - \mathbb{E}[g(X_u)]|] \leq \rho^{t-u}M,
\]
and trivially $|\mathrm{Cov}|\leq1$. Therefore, $\mathrm{Var}(\hat{A}) \leq \frac{\max \{ 1,M\}}{T^2} \sum_{u,t} \rho^{|t-u|} \leq \frac{2 \max \{ 1,M\}}{T(1-\rho)}$. Minkowski's inequality combines bias and standard deviation.
\end{proof}

\begin{corollary}[Drift radius]
\label{cor:DriftRadius}
Every stationary point of $J_\tau$ satisfies $\boldsymbol \theta^\star_d-\boldsymbol {\hat{\mu}}_d = \tau^2 \nabla \log Z(\boldsymbol \theta^\star_d)$. If, in addition $\nabla \log Z$ is $L_g$-Lipschitz and $\tau^2L_g<1$, then $J_\tau$ is strongly concave, $\boldsymbol \theta^\star_d$ is unique, and with $g_0:= \nabla \log Z(\boldsymbol {\hat{\mu}}_d)$,
\[
\frac{\tau^2 \|\boldsymbol g_0 \|}{1 + \tau^2L_g} \leq \| \boldsymbol \theta^\star_d - \boldsymbol {\hat{\mu}}_d \| \leq \frac{\tau^2 \|\boldsymbol g_0 \|}{1 - \tau^2L_g},
\]
so the drift equals $\tau^2 \|\boldsymbol g_0\| (1+ \mathcal{O}(\tau^2))$. By Fisher's identity, $\boldsymbol g_0$ is estimable from the frozen run of regime $(i)$ at no extra cost.
\end{corollary}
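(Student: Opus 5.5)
The stationarity equation comes directly from the first-order condition. By Proposition~\ref{prop:gamma}(ii), $J_\tau(\boldsymbol\theta_d) = \log Z(\boldsymbol\theta_d) - \frac{1}{2\tau^2}\|\boldsymbol\theta_d - \hat{\boldsymbol\mu}_d\|^2$, and $\log Z$ is $C^1$ by Remark~\ref{remark:regAss}. Setting $\nabla J_\tau(\boldsymbol\theta^\star_d) = \mathbf 0$ gives $\nabla\log Z(\boldsymbol\theta^\star_d) - \tau^{-2}(\boldsymbol\theta^\star_d - \hat{\boldsymbol\mu}_d) = \mathbf 0$, which is the fixed-point identity $\boldsymbol\theta^\star_d - \hat{\boldsymbol\mu}_d = \tau^2\nabla\log Z(\boldsymbol\theta^\star_d)$.

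For strong concavity, I will show that $\nabla\log Z$ being $L_g$-Lipschitz makes $\log Z - \frac{L_g}{2}\|\cdot\|^2$ concave. Concretely, $\langle \nabla\log Z(\mathbf a) - \nabla\log Z(\mathbf b), \mathbf a - \mathbf b\rangle \le L_g\|\mathbf a-\mathbf b\|^2$ by Cauchy--Schwarz. Hence $-\nabla J_\tau$ is strongly monotone with modulus $\tau^{-2} - L_g > 0$, which is exactly strong concavity of $J_\tau$.

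Strong concavity gives at most one stationary point, and it is the maximizer. Existence needs an argument, and I expect it to be the only nontrivial point. I would try the map $\Phi(\boldsymbol\theta) := \hat{\boldsymbol\mu}_d + \tau^2\nabla\log Z(\boldsymbol\theta)$. It is a contraction with constant $\tau^2 L_g < 1$, so Banach's fixed-point theorem on $\mathbb R^k$ yields a unique fixed point, which is the unique stationary point. Alternatively, strong concavity implies coercivity of $-J_\tau$, so a maximizer exists.

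For the two-sided bound, I set $\mathbf d := \boldsymbol\theta^\star_d - \hat{\boldsymbol\mu}_d$ and write $\mathbf d = \tau^2\mathbf g_0 + \tau^2\bigl(\nabla\log Z(\boldsymbol\theta^\star_d) - \nabla\log Z(\hat{\boldsymbol\mu}_d)\bigr)$. The bracket has norm at most $L_g\|\mathbf d\|$. The triangle inequality then gives $\|\mathbf d\| \le \tau^2\|\mathbf g_0\| + \tau^2 L_g\|\mathbf d\|$, which rearranges to the upper bound. The reverse triangle inequality gives $\tau^2\|\mathbf g_0\| \le \|\mathbf d\| + \tau^2 L_g\|\mathbf d\|$, which is the lower bound. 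Expanding $(1\pm\tau^2L_g)^{-1} = 1 \pm \tau^2 L_g + \mathcal O(\tau^4)$ shows both bounds are $\tau^2\|\mathbf g_0\|(1+\mathcal O(\tau^2))$.

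The final remark follows from Fisher's identity: $\mathbf g_0 = \mathbb E_{p(\boldsymbol\theta_s\mid\mathcal D,\hat{\boldsymbol\mu}_d)}[\nabla_{\boldsymbol\theta_d}\ell(\hat{\boldsymbol\mu}_d,\boldsymbol\theta_s)]$. By Proposition~\ref{prop:gamma}(i), the frozen run of regime (i) samples exactly this conditional. Averaging the gradients it already computes therefore estimates $\mathbf g_0$ at no extra cost.
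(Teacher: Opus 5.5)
Your proof is correct and follows the paper's argument: the first-order condition gives the identity, the Lipschitz gradient gives strong concavity with modulus $\tau^{-2}-L_g$, and the two-sided bound comes from the (reverse) triangle inequality applied to $\nabla\log Z(\boldsymbol\theta^\star_d)-\boldsymbol g_0$. Your contraction argument for the existence of $\boldsymbol\theta^\star_d$, and your justification of the Fisher's identity remark via Proposition~\ref{prop:gamma}(i), are small additions that the paper leaves implicit.
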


\begin{proof}
The identity is the stationary condition $\nabla \log Z(\boldsymbol \theta^\star_d) - \tau^{-2} (\boldsymbol \theta^\star_d - \boldsymbol {\hat{\mu}}_d )= 0$. An $L_g$-Lipschitz gradient makes $\frac{L_g}{2}\|\boldsymbol \theta\|^2-\log Z$ convex, so $J_\tau$ is strongly concave with modulus $\tau^{-2}-L_g >0$. From the identity, $\| \boldsymbol \theta^\star_d - \boldsymbol {\hat{\mu}}_d\| = \tau^2 \| \nabla \log Z(\boldsymbol \theta^\star_d)\|$ and $| \|\nabla \log Z(\boldsymbol \theta^\star_d) \| - \|\boldsymbol g_0\| | \leq L_g \| \boldsymbol \theta^\star_d - \boldsymbol {\hat{\mu}}_d \|$; rearrange.
\end{proof}
\section{Detailed \texttt{UCI} results}
First we are going to provide some information on the datasets. The licenses of the datasets can be found on their respective websites.
\begin{table}[H]
\centering
\small
\begin{tabular}{lcrrl}
\toprule
\textbf{Dataset} & $(d_0, d_L)$ & \textbf{Sample size} & \textbf{Parameters} & \textbf{Citation} \\
\midrule
\texttt{CONCRETE}       & (8, 1)  & 1030  & 11101 & \citep{Yeh1998CONCRETE} \\
\texttt{BOSTON}         & (13, 1) & 506   & 11601 & \citep{Harrison1978Boston} \\
\texttt{BIKESHARING}    & (12, 1) & 731   & 11501 & \citep{Hadi2013Bikesharing} \\
\texttt{WINE}           & (10, 1) & 4898  & 11301 & \citep{Cortez2009Wine} \\
\texttt{ONLINESHOPPERS} & (17, 2) & 12330 & 12102 & \citep{Sakar2018ONLINESHOPPERS} \\
\texttt{CREDIT}         & (23, 2) & 30000 & 12702 & \citep{Yeh2009Credit} \\
\texttt{BANKNOTE}       & (4, 2)  & 1372  & 10802  & \citep{Lohweg2012Banknote} \\
\texttt{HTRU2}          & (8, 2)  & 17898 & 11202 & \citep{Lyon2015HTRU2} \\
\bottomrule
\end{tabular}
\caption{Dataset characteristics.}
\label{tab:datasets}
\end{table}
\label{App:AddRes}
\begin{table}[H]
\centering
\caption{Results on \texttt{BANKNOTE} ($\mathrm{mean}\pm\mathrm{std}$).}
\label{tab:cla_banknote}
\small
\setlength{\tabcolsep}{3pt}
\begin{tabular}{ccccc}
\toprule
& \acc in \% $(\uparrow)$ & \nll $(\downarrow)$ & \ece $(\downarrow)$& Deterministic parameters in \%\\
\midrule
DWF-PSNN& \textbf{100.00 $\pm$ 0.00} & 0.009 $\pm$ 0.002& 0.009 $\pm$ 0.002 &  55.32 $\pm$ 2.56\\
ALS-BNN& \textbf{100.00 $\pm$ 0.00} & \textbf{0.004 $\pm$ 0.001} & \textbf{0.004 $\pm$ 0.000}& 0.00\\
FLS-BNN &  \textbf{100.00 $\pm$ 0.00} & \textbf{0.004 $\pm$ 0.001} & \textbf{0.004 $\pm$ 0.001}& 95.37\\
RAND-PSNN& \textbf{100.00 $\pm$ 0.00}& 0.005 $\pm$ 0.002 & 0.005 $\pm$ 0.001 & 55.32 $\pm$ 2.56\\
SNI-PSNN& 99.20 $\pm$ 0.63& 0.032 $\pm$ 0.010 & 0.025 $\pm$ 0.007 & 90.00\\
\bottomrule
\end{tabular}
\end{table}

\begin{table}[H]
\centering
\caption{Results on \texttt{HTRU2} ($\mathrm{mean}\pm\mathrm{std}$).}
\label{tab:cla_htru2}
\small
\setlength{\tabcolsep}{3pt}
\begin{tabular}{ccccc}
\toprule
& \acc in \% $(\uparrow)$ & \nll $(\downarrow)$ & \ece $(\downarrow)$& Deterministic parameters in \%\\
\midrule
DWF-PSNN& \textbf{97.83 $\pm$ 0.48} & \textbf{0.069 $\pm$ 0.016}& \textbf{0.007 $\pm$ 0.003} &  57.31 $\pm$ 2.79\\
ALS-BNN& 97.59 $\pm$ 0.50 & 0.078 $\pm$ 0.016 & 0.008 $\pm$ 0.003& 0.00\\
FLS-BNN& 97.56 $\pm$ 0.56 & 0.079 $\pm$ 0.015 & 0.008 $\pm$ 0.003& 91.97 \\
RAND-PSNN& 97.82 $\pm$ 0.49 & 0.070 $\pm$ 0.015 & \textbf{0.007 $\pm$ 0.003} & 57.31 $\pm$ 2.79\\
SNI-PSNN& 97.41 $\pm$ 0.58 & 0.086 $\pm$ 0.016 & 0.009 $\pm$ 0.002& 90.00\\
\bottomrule
\end{tabular}
\end{table}

\begin{table}[H]
\centering
\caption{Results on \texttt{ONLINESHOPPERS} ($\mathrm{mean}\pm\mathrm{std}$).}
\label{tab:cla_onlineshoppers}
\small
\setlength{\tabcolsep}{3pt}
\begin{tabular}{ccccc}
\toprule
& \acc in \% $(\uparrow)$ & \nll $(\downarrow)$ & \ece $(\downarrow)$& Deterministic parameters in \%\\
\midrule
DWF-PSNN& 89.43 $\pm$ 0.64 & \textbf{0.243 $\pm$ 0.020}& 0.016 $\pm$ 0.005 &  61.01 $\pm$ 3.43\\
ALS-BNN& 89.25$\pm$ 0.74 & 0.261 $\pm$ 0.019 & 0.024 $\pm$ 0.009& 0.00\\
FLS-BNN& 89.25 $\pm$ 0.85 & 0.256 $\pm$ 0.020 & 0.018 $\pm$ 0.006 & 85.13 \\
RAND-PSNN& \textbf{89.48 $\pm$ 0.69} & \textbf{0.243 $\pm$ 0.020} & \textbf{0.015 $\pm$ 0.005}& 60.91 $\pm$ 3.44\\
SNI-PSNN& 88.70 $\pm$ 0.90 & 0.292 $\pm$ 0.017 & 0.024 $\pm$ 0.012 & 90.00\\
\bottomrule
\end{tabular}
\end{table}

\begin{table}[H]
\centering
\caption{Results on \texttt{CREDIT} ($\mathrm{mean}\pm\mathrm{std}$).}
\label{tab:cla_credit}
\small
\setlength{\tabcolsep}{3pt}
\begin{tabular}{ccccc}
\toprule
& \acc in \% $(\uparrow)$ & \nll $(\downarrow)$ & \ece $(\downarrow)$& Deterministic parameters in \%\\
\midrule
DWF-PSNN& 78.02 $\pm$ 0.61 & 0.508 $\pm$ 0.010 & 0.022 $\pm$ 0.010 & 60.90 $\pm$ 2.61\\
ALS-BNN& \textbf{78.03 $\pm$ 0.61} & 0.502 $\pm$ 0.009 & 0.019 $\pm$ 0.006 & 0.00\\
FLS-BNN& \textbf{78.03 $\pm$ 0.61} & 0.506 $\pm$ 0.009 & 0.015 $\pm$ 0.005& 81.11\\
RAND-PSNN& 78.02 $\pm$ 0.61 & \textbf{0.500 $\pm$ 0.009} & 0.023 $\pm$ 0.006 & 60.68 $\pm$ 2.61\\
SNI-PSNN& 78.02 $\pm$ 0.61 & 0.527 $\pm$ 0.007 & \textbf{0.014 $\pm$ 0.013} & 90.00\\
\bottomrule
\end{tabular}
\end{table}

\begin{table}[H]
\centering
\caption{Results on \texttt{CONCRETE} ($\mathrm{mean}\pm\mathrm{std}$).}
\label{tab:reg_concrete}
\small
\setlength{\tabcolsep}{3pt}
\begin{tabular}{ccccc}
\toprule
& \rmse $(\downarrow)$ & \nll $(\downarrow)$ & \ece $(\downarrow)$& Deterministic parameters in \%\\
\midrule
DWF-PSNN& \textbf{0.298 $\pm$ 0.027} & \textbf{0.214 $\pm$ 0.073} & 0.107 $\pm$ 0.066 & 48.69 $\pm$ 1.27\\
ALS-BNN& 0.300 $\pm$ 0.032& 0.283 $\pm$ 0.068 & 0.134 $\pm$ 0.062 & 0.00 \\
FLS-BNN& 0.335 $\pm$ 0.030 & 0.286 $\pm$ 0.085 & 0.085 $\pm$ 0.062 & 91.89\\
RAND-PSNN& 0.331 $\pm$ 0.034 & 0.322 $\pm$ 0.113 & \textbf{0.075 $\pm$ 0.073} & 48.69 $\pm$ 1.27\\
SNI-PSNN& 0.332 $\pm$ 0.039 & 0.326 $\pm$ 0.133 & 0.097 $\pm$ 0.088 & 90.00\\
\bottomrule
\end{tabular}
\end{table}

\begin{table}[H]
\centering
\caption{Results on \texttt{WINE} ($\mathrm{mean}\pm\mathrm{std}$).}
\label{tab:reg_wine}
\small
\setlength{\tabcolsep}{3pt}
\begin{tabular}{ccccc}
\toprule
& \rmse $(\downarrow)$ & \nll $(\downarrow)$ & \ece $(\downarrow)$& Deterministic parameters in \%\\
\midrule
DWF-PSNN& 0.771 $\pm$ 0.023 & 2.210 $\pm$ 0.145 & 0.127 $\pm$ 0.015 & 47.12 $\pm$ 2.93 \\
ALS-BNN& \textbf{0.723 $\pm$ 0.031} & \textbf{1.049 $\pm$ 0.065} & \textbf{0.017 $\pm$ 0.015} & 0.00\\
FLS-BNN& 0.856 $\pm$ 0.024 & 3.395 $\pm$ 0.202 & 0.207 $\pm$ 0.017 & 90.27 \\
RAND-PSNN& 0.841 $\pm$ 0.035 & 3.314 $\pm$ 0.294 & 0.149 $\pm$ 0.050 & 47.07 $\pm$ 2.93\\
SNI-PSNN& 0.844 $\pm$ 0.036 & 3.338 $\pm$ 0.304& 0.135 $\pm$ 0.029& 90.00\\
\bottomrule
\end{tabular}
\end{table}

\begin{table}[H]
\centering
\caption{Results on \texttt{BIKESHARING} ($\mathrm{mean}\pm\mathrm{std}$).}
\label{tab:reg_bikesharing}
\small
\setlength{\tabcolsep}{3pt}
\begin{tabular}{ccccc}
\toprule
& \rmse $(\downarrow)$ & \nll $(\downarrow)$ & \ece $(\downarrow)$& Deterministic parameters in \%\\
\midrule
DWF-PSNN& 0.105 $\pm$ 0.034 & -0.159 $\pm$ 0.034 & 0.585 $\pm$ 0.196& 47.81 $\pm$ 2.95\\
ALS-BNN& 0.155 $\pm$ 0.037 & 0.049 $\pm$ 0.038 & \textbf{0.432 $\pm$ 0.084} & 0.00\\
FLS-BNN& \textbf{0.101 $\pm$ 0.040} & -0.164 $\pm$ 0.041 & 0.614 $\pm$ 0.213 & 88.70\\
RAND-PSNN& \textbf{0.101 $\pm$0.037} & \textbf{-0.174 $\pm$ 0.050} & 0.600 $\pm$ 0.191 & 47.75 $\pm$ 2.95\\
SNI-PSNN& 0.104 $\pm$ 0.033 &  -0.173 $\pm$ 0.033 & 0.674 $\pm$ 0.180 & 90.00\\
\bottomrule
\end{tabular}
\end{table}

\begin{table}[H]
\centering
\caption{Results on \texttt{BOSTON} ($\mathrm{mean}\pm\mathrm{std}$).}
\label{tab:reg_boston}
\small
\setlength{\tabcolsep}{3pt}
\begin{tabular}{ccccc}
\toprule
& \rmse $(\downarrow)$ & \nll $(\downarrow)$ & \ece $(\downarrow)$& Deterministic parameters in \%\\
\midrule
DWF-PSNN& \textbf{0.386 $\pm$ 0.113} & 0.482 $\pm$ 0.331 & 0.209 $\pm$ 0.145& 48.03 $\pm$ 2.53\\
ALS-BNN& 0.451 $\pm$ 0.107 & \textbf{0.470 $\pm$ 0.170}& 0.226 $\pm$ 0.166 & 0.00\\
FLS-BNN& 0.392 $\pm$ 0.116 & 0.513 $\pm$ 0.337 & 0.213 $\pm$ 0.172 & 87.93\\
RAND-PSNN& 0.398 $\pm$ 0.102 & 0.608 $\pm$ 0.385 & \textbf{0.196 $\pm$ 0.193 }& 47.97 $\pm$ 2.53\\
SNI-PSNN& 0.408 $\pm$ 0.106 & 0.652 $\pm$ 0.425 & 0.310 $\pm$ 0.305 & 90.00\\
\bottomrule
\end{tabular}
\end{table}

\section{The learned split}
\label{App:LearnedSplit}
\Cref{fig:learned_split} and \Cref{tab:learned_split} summarize the split learned by the prior optimization. The split is stable across splits of the data, with standard deviations of at most $0.04$ in the share of deterministic weights. The first and the output layer are largely deterministic, while the middle layer keeps about half of its weights stochastic. Since the middle layer holds the large majority of the parameters, it dominates the overall share of deterministic parameters reported in \Cref{App:AddRes}. The classification datasets are sparsified slightly more than the regression datasets.

The middle layer is sparsified least, although its regularization weight exceeds that of the first layer. This does not contradict Remark~\ref{rmk:prior_opt}, which assumes comparable gradient bounds $G_l$ across layers. The middle layer has a larger fan-in, and the gradient of the objective with respect to its prior scales is therefore unlikely to be bounded by the same constant as in the other layers.

In every dataset and split, between $7$ and $27$ of the $100$ first-layer biases remain stochastic. The coverage condition $U(S) \geq d_L$ of Proposition~\ref{prop:certificate} therefore holds with a wide margin. Before repair, the condition $|T| \geq d_0$ holds on \texttt{BANKNOTE}, \texttt{HTRU2} and in nine of ten splits of \texttt{CONCRETE}. On the other datasets the repair changes on average between 5.5 and 28.1 parameters per split, with the largest number of changes on \texttt{CREDIT}, which has the most input features (cf. \Cref{tab:certificate_uci}). Whether the output bias is deterministic varies between datasets, from none of the splits on \texttt{BANKNOTE} to all of them on \texttt{CREDIT} and \texttt{WINE}. The certificate places no condition on the output bias, so this variation does not affect the expressiveness of the masked network. All values refer to the learned masks before repair.
\begin{table}[ht]
\centering
\caption{Learned split of DWF-PSNN. The first columns give the share of deterministic parameters per layer, with mean and standard deviation over the ten splits. The next column gives the range over splits of the number of stochastic first-layer biases out of 100. The last column gives the number of splits in which the output bias is deterministic.}
\label{tab:learned_split}
\small
\setlength{\tabcolsep}{4pt}
\begin{tabular}{lccccc}
\toprule
& \multicolumn{3}{c}{Deterministic parameters} & Stochastic & Deterministic \\
\cmidrule(lr){2-4}
Dataset & Layer 1 & Layer 2 & Output layer & first-layer biases & output bias \\
\midrule
\texttt{ONLINESHOPPERS} & $0.87 \pm 0.01$ & $0.56 \pm 0.04$ & $0.86 \pm 0.03$ & 9--17 & 6/10 \\
\texttt{CREDIT} & $0.86 \pm 0.01$ & $0.54 \pm 0.03$ & $0.89 \pm 0.02$ & 7--20 & 10/10 \\
\texttt{BANKNOTE} & $0.87 \pm 0.02$ & $0.53 \pm 0.03$ & $0.84 \pm 0.02$ & 8--18 & 0/10 \\
\texttt{HTRU2} & $0.87 \pm 0.01$ & $0.54 \pm 0.03$ & $0.89 \pm 0.02$ & 10--18 & 7/10 \\
\midrule
\texttt{CONCRETE} & $0.80 \pm 0.01$ & $0.46 \pm 0.01$ & $0.78 \pm 0.04$ & 11--22 & 4/10 \\
\texttt{BOSTON} & $0.81 \pm 0.01$ & $0.43 \pm 0.03$ & $0.78 \pm 0.03$ & 12--24 & 5/10 \\
\texttt{BIKESHARING} & $0.82 \pm 0.01$ & $0.43 \pm 0.03$ & $0.80 \pm 0.04$ & 13--25 & 4/10 \\
\texttt{WINE} & $0.80 \pm 0.01$ & $0.43 \pm 0.03$ & $0.86 \pm 0.03$ & 18--27 & 10/10 \\
\bottomrule
\end{tabular}
\end{table}

\section{The conditions of the certificate}
\label{app:ucda}

\begin{table}[ht]
\centering
\small
\caption{The violated conditions are the ones in Proposition~\ref{prop:certificate}. Values are the mean Wasserstein-1
distance to the true conditional over nine held out inputs, with the
standard error over five data seeds. The sampling floor is the distance
between two independent samples from the true conditional at this sample
size, so no model can go below it, and the point predictor is what a
model reaches by matching the conditional median without representing any
spread. The last column repeats the study with learnable prior scales.}
\label{tab:ucda}
\begin{tabular}{llccc}
\toprule
 & & & \multicolumn{2}{c}{$W_1$} \\
\cmidrule(lr){4-5}
variant & condition violated & $n_{\mathrm{stoch}}$
        & fixed & learned \\
\midrule
sampling floor  & --- & --- & 0.020 & 0.020 \\
point predictor & --- & 0   & 0.681 & 0.681 \\
\midrule
noise input, $d_L$   & reference, none & 0 & $0.053 \pm 0.004$ & $0.053 \pm 0.004$ \\
first layer, $d_L$   & none            & 1 & $0.074 \pm 0.002$ & $0.075 \pm 0.002$ \\
\midrule
first layer, $d_L$, weight & vanishes at $x=0$ & 1    & $0.284 \pm 0.002$ & $0.320 \pm 0.010$ \\
output layer, $d_L$        & bias coverage     & 1    & $0.568 \pm 0.002$ & $0.588 \pm 0.003$ \\
output layer, all        & bias coverage     & 64   & $0.495 \pm 0.004$ & $0.465 \pm 0.005$ \\
first layer, $d_L$, linear & activation        & 1    & $0.935 \pm 0.002$ & $0.935 \pm 0.002$ \\
full                     & deterministic units              & 4353 & $5.411 \pm 0.008$ & $0.111 \pm 0.005$ \\
\midrule
deterministic & no randomness & 0 & $0.686 \pm 0.002$ & $0.686 \pm 0.002$ \\
\bottomrule
\end{tabular}
\end{table}

\begin{figure}[ht]
\centering
\includegraphics[width=0.5\textwidth]{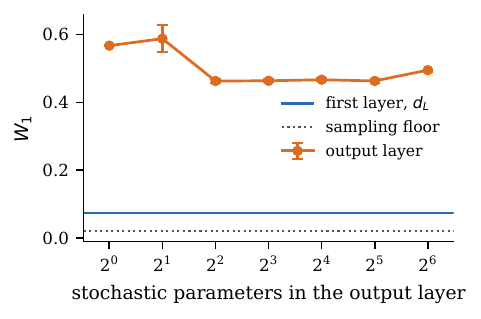}
\caption{Distance to the true conditional against the number of
randomized parameters in the output layer, with the certified
configuration and the sampling floor as horizontal lines. Mean over five
data seeds with the standard error.}
\label{fig:ucda-count}
\end{figure}

\paragraph{Setup.}
The target has a bimodal conditional distribution with an input dependent mixture
weight, so that any Gaussian conditional is excluded by construction.
Since the certificate is a statement about representation, posterior
inference would confound capacity with the quality of the sampler. Every
variant is therefore fitted directly as a conditional generator by
minimizing the energy score, which is strictly proper, and is scored by
the Wasserstein-1 distance to the true conditional. Within each data seed
we keep the best of five restarts, selected on the validation energy
score and never on the reported distance, because the statement is an
existence statement about parameters. All variants share one
architecture with two hidden layers of width sixty-four and differ only
in which parameters are randomized.
In the column \emph{fixed}, every stochastic parameter has prior scale $\sigma = 1$. In the column \emph{learned}, the scales are parameterized by a softplus and fitted jointly with the means, without a sparsity penalty.

\paragraph{The reference acts as a gate.}
In the reference the randomness enters as an additional input and every
weight is deterministic, so the map is unconstrained. It reaches $0.053$
against a sampling floor of $0.020$, which shows that the fitting
procedure is not the binding constraint and that the remaining
differences can be read as consequences of the condition each variant
violates.

\paragraph{Why biases and not weights.}
The variant that randomizes a weight of the first layer rather than a
bias satisfies the count and the position, and still reaches only
$0.284$. A stochastic input weight contributes
$\sigma_{j,i}\varepsilon_{j,i}x_i$, which vanishes at $\mathbf x=\mathbf 0$, so the scale
$s_j$ in the proof of Proposition~\ref{prop:certificate} is not bounded
away from zero and the predictive distribution degenerates there. This is the
reason Definition~\ref{def:bias_coverage} counts biases.

\paragraph{Fixed and learnable prior scales.}
Only the fully stochastic row depends on this choice, and it changes by a
factor of fifty. With learnable scales that variant contains the
certified one as a special case, since it can drive all but $d_L$
standard deviations to zero, and the optimizer finds such a
configuration. The condition $|T| \geq d_0$ therefore constrains a fixed mask with
fixed scales, which is the setting of our two stage method, where the
prior scales are determined in the first stage and held during sampling.
With fixed scales the variant is not merely worse but wildly
overdispersed, at eight times the distance of a point predictor, which is
why its panel in \Cref{fig:ucda_toy} appears flat.

\paragraph{Count does not substitute for position.}
Figure~\ref{fig:ucda-count} increases the number of randomized parameters
in the output layer from one to sixty-four. The distance is flat between
$0.46$ and $0.59$ throughout and never approaches the certified
configuration at $0.074$, so a factor of sixty-four in the count does not
compensate for a position the certificate rejects.

\begin{figure}[H]
\centering
\includegraphics[width=\textwidth]{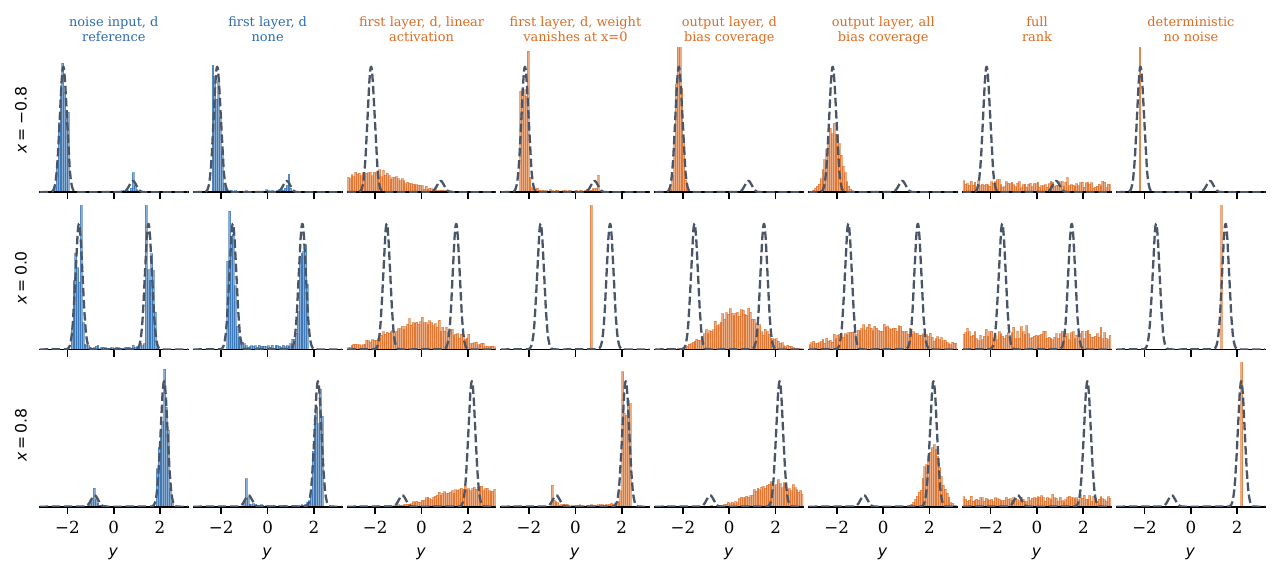}
\caption{All variants of Table~\ref{tab:ucda} at three inputs, with fixed
prior scales. In the panel titles $d$ denotes the output dimension $d_L$, rank denotes the condition $|T|\geq d_0$ on the fully deterministic units and no noise corresponds to no randomness in \Cref{tab:ucda}. Grey dashed is the true conditional, blue marks the
configurations the certificate admits. The vertical range is fixed by the
target density, so mass outside it appears flat.}
\label{fig:ucda-appendix}
\end{figure}
\Cref{fig:toy_u_hist} shows the prior scales behind \Cref{fig:toy_cutoff}. For $\lambda = 10^{-5}$ most factors are driven many orders of magnitude below the cutoff, which shows that the penalty of \Cref{sec:pri_opt} pushes pruned scales towards zero. For $\lambda = 3 \cdot 10^{-6}$ the scales form a continuum across the cutoff.
\begin{figure}[H]
\centering
\includegraphics[width=0.6\textwidth]{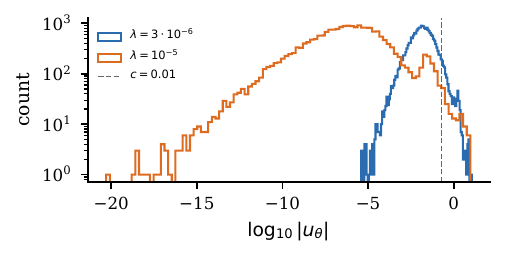}
\caption{Histogram of $\log_{10}|u_\theta|$ on the bimodal target over five data seeds, with the cutoff $c = 0.01$ on the prior scale, which corresponds to $|u| \approx  0.2$, as a dashed line. For larger $\lambda$ the pruned products $u_\theta$ collapse towards zero.}
\label{fig:toy_u_hist}
\end{figure}
\section{Experimental Details}
\label{App:experimental_details}
We use a \mlp with two hidden layers of 100 units each and a $\tanh$ activation function, where not stated differently. All regression and classification experiments were run on an Nvidia A40 GPU, and each model finished in less than two hours. The runs for the role of $\gamma$ and the \ucda experiment have also been performed on an Nvidia A40 GPU and finished within hours. All other experiments were performed on a MacBook Pro 2021 with Apple M1 Pro chip and took between a few minutes and a few hours. Unless stated otherwise, we use the random seed 123 for initialization and sampling, and all methods share the same ten data splits. We build on the code base provided by \citep{tran2022needgoodfunctionalprior}. The repository does not state a license. The accompanying paper is published under CC-BY 4.0. We have not performed an extensive hyperparameter search, but in most cases tried out a few configurations and picked the best one.
Additionally, we use a two-phase learning-rate schedule. Let $T$ denote the total
number of optimization steps and let
$T_0 = \lfloor T/2 \rfloor$. For optimizer $j$, the learning rate at
step $t \in \{0,\ldots,T-1\}$ is given by
\begin{equation}
    \eta_t^{(j)}
    =
    \eta_0^{(j)}
    \begin{cases}
        1,
        & t < T_0, \\[4pt]
        (t - T_0 + 1)^{-\alpha_j},
        & t \geq T_0,
    \end{cases}
    \label{eq:lr_schedule}
\end{equation}
where $\alpha_{\mathrm{SGHMC}} = 0.2$ and
$\alpha_{\mathrm{Adam}} = 0.4$.

The learned masks satisfy the coverage condition in all runs. The condition $|T| \geq d_0$ fails on the datasets with many input features, where fewer than $d_0$ first-layer units are fully deterministic. In these runs we apply the minimal repair of \Cref{cor:MinRep}, which changes between 1 and 34 parameters per run (\Cref{tab:certificate_uci}).
\subsection{Regression experiments}
For regression, \ece denotes the calibration error of
\citet{kuleshov2018accurate}. We summarize the predictive distribution of
test point $i$ by a Gaussian with the predictive mean and variance and
evaluate its cumulative distribution function $F_i$ at the target $y_i$.
For the confidence levels $p_j = j/10$ with $j = 1, \dots, 10$, let
$\hat p_j$ be the share of test points with $F_i(y_i) < p_j$. We report
$\sum_{j=1}^{10} (p_j - \hat p_j)^2$. This is a sum over the levels and
not an average, so its scale is not comparable to that of the
classification \ece. Additionally, we standardize the targets. 
The target \gp has a constant zero mean function, a variance of 1, a length-scale of $\ell = \sqrt{2 \cdot d_0}$, and a noise variance of 0.1. On top of that, we have the following hyperpriors:
\[
\ell_i \sim \operatorname{LogNormal}\big(\log(\ell),\, 1\big), \quad i = 1, \dots, d_0
\]
\[
\alpha^2 \sim \operatorname{LogNormal}(0.1,\, 1)
\]
We set D=3 and the cutoff as 0.01.
The configuration for the prior optimization are:
\begin{table}[H]
\centering
\begin{tabular}{ll}
\toprule
\textbf{Hyperparameter} & \textbf{Value} \\
\midrule
Batch size & 64 \\
Number of iterations & 10000 \\
Regularization ($\lambda$) & (9, 12, 15) / D \\
Number of function samples & 100 \\
Learning rate & 0.01 \\
Kernel & Polynomial of degree 2 \\
\bottomrule
\end{tabular}
\caption{Hyperparameters used for prior optimization}
\label{tab:prior_hyperparameters_1}
\end{table}

For sampling we have used the following hyperparameters:
\begin{table}[H]
\centering
\begin{tabular}{ll}
\toprule
\textbf{Hyperparameter} & \textbf{Value} \\
\midrule
Batch size & 64 \\
Number of samples per chain & 30 \\
Discarded samples before sampling  & 10 \\
Thinning interval & 200 \\
Number of chains & 4 \\
Momentum decay & 0.02 \\
Burn in steps& 3000 \\
\sghmc learning rate (h) & 0.03 \\
\texttt{Adam} learning rate ($\eta$) & 0.003\\
\bottomrule
\end{tabular}
\caption{Hyperparameters used for all models but SNI-PSNN}
\label{tab:sampling_hyperparameters_1}
\end{table}
We have set $\gamma = 2.5$.
\begin{table}[H]
\centering
\begin{tabular}{ll}
\toprule
\textbf{Hyperparameter} & \textbf{Value} \\
\midrule
Batch size & 64 \\
Prior precision & 1.0 \\
Stochastic fraction & 0.1 \\
Noise variance & 0.1 \\
Epochs & 1000 \\
Learning rate & 0.01\\
\bottomrule
\end{tabular}
\caption{Hyperparameters used for SNI-PSNN}
\label{tab:sampling_hyperparameters_sni_psnn_1}
\end{table}

\subsection{Classification experiment}
The \gp is set up as in the regression experiment. For the prior optimization we also use the same hyperparameters, but with a batch size of 128. For the sampling we use the following hyperparameters:
\begin{table}[H]
\centering
\begin{tabular}{ll}
\toprule
\textbf{Hyperparameter} & \textbf{Value} \\
\midrule
Batch size & 64 \\
Number of samples per chain & 30 \\
Discarded samples before sampling  & 10 \\
Thinning interval & 200 \\
Number of chains & 4 \\
Momentum decay & 0.02 \\
Burn in steps & 3000 \\
\sghmc learning rate (h) & 0.01 \\
\texttt{Adam} learning rate ($\eta$) & 0.001 \\
\bottomrule
\end{tabular}
\caption{Hyperparameters used for all models but SNI-PSNN}
\label{tab:sampling_hyperparameters_cla_1}
\end{table}
Again we have set $\gamma = 2.5$.
\begin{table}[H]
\centering
\begin{tabular}{ll}
\toprule
\textbf{Hyperparameter} & \textbf{Value} \\
\midrule
Batch size & 64 \\
Prior precision & 1.0 \\
Stochastic fraction & 0.1 \\
Noise variance & 0.1 \\
Epochs & 1000 \\
Learning rate & 0.01\\
\bottomrule
\end{tabular}
\caption{Hyperparameters used for SNI-PSNN}
\label{tab:sampling_hyperparameters_sni_psnn_2}
\end{table}
\subsection{Prior optimization}
The hyperpriors are the same, but the  hyperparameters are as follows:
\begin{table}[H]
\centering
\begin{tabular}{ll}
\toprule
\textbf{Hyperparameter} & \textbf{Value} \\
\midrule
Batch size & 128 \\
Number of iterations & 15000 \\
Regularization ($\lambda$) & (1.2, 2.4, 4.8) / D \\
Number of function samples & 100 \\
Learning rate & 0.03 \\
Kernel & Polynomial of degree 2 \\
\bottomrule
\end{tabular}
\caption{Hyperparameters used for prior optimization}
\label{tab:prior_hyperparameters_2}
\end{table}
The sampling configurations are:
\begin{table}[H]
\centering
\begin{tabular}{ll}
\toprule
\textbf{Hyperparameter} & \textbf{Value} \\
\midrule
Batch size & 64 \\
Number of samples per chain & 60 \\
Discarded samples before sampling  & 20 \\
Thinning interval & 100 \\
Number of chains & 4 \\
Momentum decay & 0.01 \\
Burn in steps & 10000 \\
\sghmc learning rate (h) & 0.03 \\
\texttt{Adam} learning rate ($\eta$) & 0.003 \\
\bottomrule
\end{tabular}
\caption{Sampling hyperparameters}
\label{tab:sampling_hyperparameters_prior_opt}
\end{table}
Additionally, we have set $\gamma = 25$.
\subsection{Role of $\gamma$}
We have used the same hyperparameters for the prior optimization as in the regression experiments. For sampling we have used:
\begin{table}[H]
\centering
\begin{tabular}{ll}
\toprule
\textbf{Hyperparameter} & \textbf{Value} \\
\midrule
Batch size & 64 \\
Number of samples per chain & 100 \\
Discarded samples before sampling  & 10 \\
Number of burn-in steps & 10000\\
Thinning interval & 20 \\
Number of chains & 4 \\
Momentum decay & 0.02 \\
\sghmc sampler rate $(h)$& 0.03\\
\texttt{Adam} learning rate $(\eta)$& 0.003\\
\bottomrule
\end{tabular}
\caption{Hyperparameters used for sampling}
\label{tab:sampling_hyperparameters_cla_2}
\end{table}
For (ii) we use $\gamma = 0.03$ and $\gamma = 2.5$, and for (iii) $\gamma = 10^{13}$.
\subsection{Learned and random splits on the bimodal target}
\label{app:bimodal_target}
We use the architecture, target and energy-score fitting of \Cref{app:ucda}. In a first stage, the prior scales are parameterized as $\sigma_\theta = H(u_\theta)$ with $D = 3$ factors and fitted jointly with the means under the penalty of \Cref{sec:pri_opt}, with $\lambda \in \{3 \cdot 10^{-7}, 7 \cdot 10^{-7}, 1.5 \cdot 10^{-6}, 3 \cdot 10^{-6}, 6 \cdot 10^{-6}, 10^{-5}, 1.5 \cdot 10^{-5}\}$ and layer weights in the ratio $3:4:5$. Scales below $c = 0.01$ are set to zero. Unlike in the \texttt{UCI} experiments, we apply this threshold only after the first stage, so that all scales receive gradients during the optimization. In a second stage, the means are refitted with the mask and the scales held fixed. Random masks permute the thresholded scales either within each layer and parameter type or across all parameters. We use five data seeds, three random masks of each kind per seed and 12{,}000 steps per fit. \subsection{Sensitivity to the cutoff}
We use the setting of \Cref{app:bimodal_target} with $\lambda \in \{3 \cdot 10^{-6}, 10^{-5}\}$ and five data seeds. After the first stage we threshold the learned scales at $c \in \{0.003, 0.01, 0.03, 0.1\}$ and refit the means with the mask and the scales held fixed, using 12{,}000 steps per fit.
\subsection{Certificate of random masks}
For each dataset we draw 2{,}000 random first-layer masks with the learned shares of deterministic weights and biases and check both conditions of Proposition~\ref{prop:certificate}. At the learned budget, random masks satisfy the certificate on \texttt{BANKNOTE} and \texttt{HTRU2}, in 97\% of the draws on \texttt{CONCRETE}, and in at most 9\% of the draws on the other datasets.
\end{document}